\newcommand{\code}[1]{{\footnotesize\texttt{#1} }}
\newcommand{\boldm}{\mathbf{m}}
\newcommand{\boldx}{\mathbf{x}}
\newcommand{\boldw}{\mathbf{w}}
\newcommand{\edge}[3]{(#1, #3, #2)}
\definecolor{background}{RGB}{249, 250, 250}
\definecolor{string}{RGB}{153, 199, 180}
\definecolor{comment}{RGB}{153, 153, 153}
\definecolor{normal}{RGB}{50, 50, 50}
\definecolor{identifier}{RGB}{102, 153, 204}
\definecolor{number}{RGB}{249, 174, 87}
\tiny\color{black}\ttfamily,
\scriptsize\color{normal}\ttfamily,					
\newcommand{\cmark}{\ding{51}}
\newcommand{\xmark}{\ding{55}}
\title{Deep Graph Library: A Graph-Centric, Highly-Performant Package for Graph Neural Networks}
\author{
Minjie Wang \textsuperscript{2,3},
Da Zheng \textsuperscript{1},
Zihao Ye \textsuperscript{2},
Quan Gan \textsuperscript{2},
Mufei Li \textsuperscript{2},
Xiang Song \textsuperscript{2},\\
\bf{
Jinjing Zhou \textsuperscript{2},
Chao Ma \textsuperscript{2},
Lingfan Yu \textsuperscript{3},
Yu Gai \textsuperscript{2},
Tianjun Xiao \textsuperscript{2},
Tong He \textsuperscript{2},}\\
\bf{
George Karypis \textsuperscript{1},
Jinyang Li \textsuperscript{3} \& Zheng Zhang \textsuperscript{2,4}} \\
\textsuperscript{1} Amazon Web Services, \textsuperscript{2} AWS Shanghai AI Lab \\
\textsuperscript{3} New York University, \textsuperscript{4} NYU Shanghai
}
\begin{document}

\maketitle

\begin{abstract}

Advancing research in the emerging field of deep graph learning requires new tools to support tensor computation over graphs. In this paper, we present the design principles and implementation of Deep Graph Library (DGL)\footnote{Project site: \url{https://www.dgl.ai}}. DGL distills the computational patterns of GNNs into a few generalized sparse tensor operations suitable for extensive parallelization. By advocating graph as the central programming abstraction, DGL can perform optimizations transparently. By cautiously adopting a framework-neutral design, DGL allows users to easily port and leverage the existing components across multiple deep learning frameworks. Our evaluation shows that DGL significantly outperforms other popular GNN-oriented frameworks in both speed and memory consumption over a variety of benchmarks and has little overhead for small scale workloads.


\end{abstract}
\section{Introduction}\label{s:intro}

Graph neural network (GNN) generalizes traditional deep learning to capture structural information in the data by modeling a set of node entities together with their relationships (edges). Its application range is broad, including molecules, social networks, knowledge graphs and recommender systems ~\citep{zitnik2018modeling,schlichtkrull2018modeling,hamilton2018embedding,pinsage}, or in general any datasets that have structural information. As a vibrant and young field, accelerating research on GNN calls for developing domain packages that are simultaneously flexible and powerful for researchers, and efficient and performant for real-world applications. 

Meeting both requirements are challenging: there are significant \emph{semantic} gaps between the tensor-centric perspective of today's popular deep-learning (DL) frameworks and that of a graph, and \emph{performance} gaps between the computation/memory-access patterns induced by the sparse nature of graphs and the underlying parallel hardware that are highly optimized for dense tensor operations.

This paper gives an overview of the design principles and implementation of Deep Graph Library (DGL), an open-source domain package specifically designed for researchers and application developers of GNN. Specifically, we make the following contributions:
\begin{itemize}[leftmargin=0.5cm]
    \item DGL distills the computational patterns of GNNs into a few user-configurable message-passing primitives; these primitives generalize sparse tensor operations and cover both the forward inference path and the backward gradient computing path. As such, they not only serve as the building blocks optimized for today's hardware, but also lay the foundation for future optimizations as well. In addition, DGL identifies and explores a wide range of parallelization strategies, leading to speed and memory efficiency. 
    \item DGL makes graph the central programming abstraction. The graph abstraction allows DGL to simplify user programming by taking full control of the messy details of manipulating graph data.
    \item A full GNN application takes more than a GNN model; the other components (e.g. data preprocessing and feature extraction) are outside the scope of DGL. As such, DGL strives to be as framework neutral as possible. DGL runs on top of PyTorch~\citep{pytorch}, TensorFlow~\citep{tensorflow}, MXNet~\citep{mxnet} and leverages their capability as much it can, while minimizing the effort it takes to port a model across frameworks. Many choices we made are applicable to other domain packages that share the same aspiration.
\end{itemize}

The rest of the paper is organized as follows. We first introduce the backgrounds about GNN message passing in Sec.~\ref{sec:gnn}. We formulate these computations as two computational patterns -- g-SpMM and g-SDMM, and discuss the parallelization strategies in Sec.~\ref{s:gspmm}. Sec.~\ref{sec:impl} describes the design and implementation of the DGL framework. We discuss some related works in Sec.~\ref{sec:related} and evaluate DGL in Sec.~\ref{sec:eval}.

\section{Graph neural networks and message passing}
\label{sec:gnn}



There have been a significant development in extending deep neural networks to non-euclidean data such as graphs and manifolds. Many efforts~\citep{scarselli2009,bruna2013spectral,defferrard2016convolutional,kipf2017semi,graphsage,gat} are made to formulate appropriate model architectures for learning on graphs, which gave birth to the Graph Neural Networks (GNNs) family. Recent studies~\citep{mpnn,graphnets} manage to unify different GNN variants into the \textbf{message passing paradigm}. Let $\mathcal{G(V, E)}$ be a graph with nodes $\mathcal{V}$ and edges $\mathcal{E}$; Let $\boldx_v\in\mathbb{R}^{d_1}$ be the feature for node $v$, and $\boldw_{e}\in\mathbb{R}^{d_2}$ be the feature for edge $\edge{u}{v}{e}$\footnote{We use a slightly different notation than a traditional $(u, v)$ pair, where $e$ is the ID associated with the edge.}. The message passing paradigm defines the following node-wise and edge-wise computation at step $t+1$:

\begin{align}
\label{eq:ef}
\text{Edge-wise: } \boldm_{e}^{(t+1)}& = \phi \left( \boldx_v^{(t)}, \boldx_u^{(t)}, \boldw_{e}^{(t)} \right), \edge{u}{v}{e}\in\mathcal{E}. \\
\label{eq:nf}
\text{Node-wise: } \boldx_v^{(t+1)} &= \psi \left(\boldx_v^{(t)}, \rho\left(\left\lbrace\boldm_{e}^{(t+1)} : \edge{u}{v}{e} \in \mathcal{E} \right\rbrace \right) \right).
\end{align}



In the above equations, $\phi$ is a \emph{message function} defined on each edge to generate a message by combining the edge feature with the features of its incident nodes;
$\psi$ is an \emph{update function} defined on each node to update the node feature by aggregating its incoming messages using the \emph{reduce function} $\rho$.
In GNNs, the message and update functions are parameterized by neural network modules, 
and $\rho$ can be any set function such as sum, mean, max/min, or even an LSTM network~\citep{graphsage}. 

\section{GNN message passing as generalized SpMM and SDDMM.}
\label{s:gspmm}

\newtheorem{thm}{Theorem}
\newtheorem{lem}{Lemma}
\newtheorem{coro}{Corollary}
\newtheorem{defin}{Definition}

\newcommand{\pp}[2]{\frac{\partial #1}{\partial #2}}
\newcommand{\dpp}[2]{\dfrac{\partial #1}{\partial #2}}
\newcommand{\gSDDMM}{\text{g-SDDMM}}
\newcommand{\gSpMM}{\text{g-SpMM}}

There is a strong connection between the message passing paradigm to sparse matrix operations. For example, given the node feature matrix $\mathbf{X}\in \mathbb{R}^{|\mathcal{V}|\times d}$ and the adjacency matrix $\mathbf{A}$ of a graph, the node-wise computation in the graph convolutional network (GCN)~\citep{kipf2017semi} is a sparse-dense matrix multiplication (SpMM) $\mathbf{Y}=\mathbf{A}\mathbf{X}$. For the edge-wise computation, many GNN models~\citep{gat, graphblas} calculate an attention weight on each edge. One popular formulation of calculating attention weight is by a dot product between the source and destination node features~\citep{vaswani2017attention}. This corresponds to a sampled dense-dense matrix multiplication (SDDMM) operation $\mathbf{W}=\mathbf{A}\odot (\mathbf{X}\mathbf{X}^T)$: semantically, it multiplies two dense matrices, followed by an element-wise multiplication with a sparse mask matrix, and output a sparse matrix.

An important characteristic of SDDMM is that it maps the representation of an edge's incident nodes to the representation on the edge. Similarly, SpMM aggregates the representation of a node's inbound edges into a node representation. Both of them can be extended.
Given a graph $\mathcal{G(V, E)}$,
\begin{itemize}[leftmargin=*]
    \item A \emph{generalized SDDMM} (g-SDDMM) defined on graph $\mathcal{G}$ with message function $\phi_m$ is a function
    $$
    \gSDDMM_{\mathcal{G}, \phi_m}: 
        \mathbb{R}^{|\mathcal{V}| \times d_1},
        \mathbb{R}^{|\mathcal{V}| \times d_2},
        \mathbb{R}^{|\mathcal{E}| \times d_3}
     \mapsto \mathbb{R}^{|\mathcal{E}| \times d_4}
    $$
    where the output edge representations $\mathbf{M} = \gSDDMM_{\mathcal{G}, \phi_m}\left(\mathbf{X}, \mathbf{Y}, \mathbf{W}\right)$ are computed from the edges' own features, as well as features of their incident nodes:
    $$
    \mathbf{m}_e =
    \phi_m\left( \mathbf{x}_u, \mathbf{y}_v, \mathbf{w}_e \right), \quad \forall \edge{u}{v}{e} \in \mathcal{E}.
    $$
    \item A \emph{generalized SpMM} (g-SpMM) defined on graph $\mathcal{G}$ with message function $\phi_z$ and reduce function $\rho$ is a function
    $$
    \gSpMM_{\mathcal{G}, \phi_z, \rho}: 
        \mathbb{R}^{|\mathcal{V}| \times d_1},
        \mathbb{R}^{|\mathcal{V}| \times d_2},
        \mathbb{R}^{|\mathcal{E}| \times d_3}
     \mapsto \mathbb{R}^{|\mathcal{V}| \times d_4}
    $$
    where the output node representations $\mathbf{Z} = \gSpMM_{\mathcal{G}, \phi_z, \rho}\left(\mathbf{X}, \mathbf{Y}, \mathbf{W}\right)$ are computed from the nodes' inbound edge features, the node features themselves, and the neighbor features:
    $$
    \mathbf{z}_v =
    \rho \left( \left\lbrace \phi_z\left( \mathbf{x}_u, \mathbf{y}_v, \mathbf{w}_e \right) : \edge{u}{v}{e} \in \mathcal{E} \right\rbrace \right), \quad \forall v \in \mathcal{V}.
    $$
\end{itemize}

These two primitives play an essential role in GNN computations; the forward path essentially applies a series of g-SpMM (and g-SDMM if attention is involved, as in GAT) to derive a stack of node representations. One can prove that the gradient of the objective function w.r.t. g-SDDMM and g-SpMM inputs can be expressed as another g-SDDMM and g-SpMM (see the supplementary materials for the full proof):

\begin{thm}
    \label{thm:mm}
    Given $\mathbf{M}=\text{g-SDDMM}_{\mathcal{G}, \phi_m}(\mathbf{X}, \mathbf{Y}, \mathbf{W})$ and $\mathbf{Z}=\text{g-SpMM}_{\mathcal{G}, \phi_z, \rho}(\mathbf{X}, \mathbf{Y}, \mathbf{W})$ and an objective function $\mathcal{L} = \ell(\mathbf{M}, \mathbf{Z})$.  Then
    \begin{itemize}
        \item The partial derivative
    $\pp{\mathcal{L}}{\mathbf{W}}$ can be computed by a g-SDDMM on graph $\mathcal{G}$.
        \item The partial derivative $\pp{\mathcal{L}}{\mathbf{X}}$ can be computed by a g-SpMM on the reverse graph $\Tilde{\mathcal{G}}(\mathcal{V}, \Tilde{\mathcal{E}}), \Tilde{\mathcal{E}} = \left\lbrace
        \edge{u}{v}{e} : \edge{v}{u}{e} \in \mathcal{E}\right\rbrace$.
        \item The partial derivative $\pp{\mathcal{L}}{\mathbf{Y}}$ can be computed by a g-SpMM on graph $\mathcal{G}$.
    \end{itemize}
\end{thm}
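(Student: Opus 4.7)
The plan is to apply the chain rule to each of the three partial derivatives and to recognize that, in each case, the resulting per-index formula exactly matches the $\gSDDMM$ or $\gSpMM$ template once an appropriate derived message function is constructed from the Jacobians of $\phi_m$, $\phi_z$, and $\rho$. Since $\mathcal{L} = \ell(\mathbf{M}, \mathbf{Z})$, I would split every gradient as $\pp{\mathcal{L}}{(\cdot)} = \pp{\mathcal{L}}{\mathbf{M}} \cdot \pp{\mathbf{M}}{(\cdot)} + \pp{\mathcal{L}}{\mathbf{Z}} \cdot \pp{\mathbf{Z}}{(\cdot)}$ and treat the upstream gradients $\pp{\mathcal{L}}{\mathbf{M}}$ and $\pp{\mathcal{L}}{\mathbf{Z}}$ as known inputs that can be stored as extra edge and node features for the backward pass.

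First I would handle $\pp{\mathcal{L}}{\mathbf{W}}$. The key observation is that, for each edge $\edge{u}{v}{e} \in \mathcal{E}$, the feature $\mathbf{w}_e$ influences only $\mathbf{m}_e$ and the single aggregated output $\mathbf{z}_v$ at its unique destination $v$. Consequently $\pp{\mathcal{L}}{\mathbf{w}_e}$ is a pointwise function of $\mathbf{x}_u$, $\mathbf{y}_v$, $\mathbf{w}_e$, $\pp{\mathcal{L}}{\mathbf{m}_e}$, and $\pp{\mathcal{L}}{\mathbf{z}_v}$, all local to the edge and its two endpoints. Bundling the upstream gradients into the edge-feature and node-feature slots lets me define a derived message function $\tilde\phi$ that emits $\pp{\mathcal{L}}{\mathbf{w}_e}$ on each edge, which is exactly the signature of a $\gSDDMM$ on $\mathcal{G}$.

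Next I would handle $\pp{\mathcal{L}}{\mathbf{Y}}$. The feature $\mathbf{y}_v$ appears in $\mathbf{m}_e$ for every inbound edge $\edge{u}{v}{e}$ of $v$, and also as the second argument of $\phi_z$ in every term of the aggregation producing $\mathbf{z}_v$; the chain rule therefore gives $\pp{\mathcal{L}}{\mathbf{y}_v}$ as a sum over the inbound edges of $v$, which is precisely the $\gSpMM$ aggregation pattern on $\mathcal{G}$ with a derived per-edge message and $\rho = \mathrm{sum}$. The case of $\pp{\mathcal{L}}{\mathbf{X}}$ proceeds symmetrically, but now $\mathbf{x}_u$ contributes through the outbound edges of $u$; reindexing $\edge{u}{v}{e} \mapsto \edge{v}{u}{e}$ turns these outbound edges into inbound edges in the reverse graph $\tilde{\mathcal{G}}$, so the same argument exhibits $\pp{\mathcal{L}}{\mathbf{X}}$ as a $\gSpMM$ on $\tilde{\mathcal{G}}$.

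The main subtle point I would flag is the treatment of the reduce function $\rho$: the factor $\partial \rho / \partial \phi_z(\mathbf{x}_u, \mathbf{y}_v, \mathbf{w}_e)$ that appears in every chain-rule expansion must itself be an edge-local quantity for the derived message function to remain a valid $\gSDDMM$ or $\gSpMM$ message. For standard aggregators (sum, mean, max) this Jacobian is a trivial local quantity (respectively $1$, $1/\mathrm{deg}(v)$, or an argmax indicator) and can be absorbed into the induced message without difficulty. For stateful reducers such as an LSTM one has to cache the per-edge Jacobian as an auxiliary edge feature during the forward pass and replay it in the backward primitive; I would make this caching assumption explicit. Modulo this bookkeeping, matching the three chain-rule expressions against the two primitive templates delivers the three claims.
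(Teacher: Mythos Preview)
Your proposal is correct and follows essentially the same approach as the paper: apply the chain rule edge-wise or node-wise and pattern-match the resulting expressions against the g-SDDMM and g-SpMM templates, treating the upstream gradients as extra node/edge features. The paper merely packages this into four separate lemmas (one for each combination of primitive and argument) and then notes that the sum of two g-SDDMMs (resp.\ g-SpMMs) on the same graph is again a g-SDDMM (resp.\ g-SpMM); your explicit discussion of when $\nabla_{\mathbf{m}_e}\rho$ is edge-local is in fact more careful than the paper's own treatment, which leaves that point implicit.
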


The benefits of such formulation are two. First, consolidating all the GNN computations into two patterns lays a foundation for system optimizations such as parallelization and auto-tuning. Second, g-SpMM naturally avoids generating intermediate storage for messages, and g-SDDMM avoids copying node representations to edges. Later, we will show that such fused computation is the key reason for a superior training speed and memory efficiency (Sec.~\ref{sec:eval}).

\subsection{Parallelizing g-SpMM and g-SDDMM on tensorized hardware}

Modern hardware like GPUs and TPUs utilizes large-scale multi-threading to achieve high throughput while hiding memory access latency. This requires the workload to have two characteristics. First, the computation-to-memory-access ratio must be high so that the cost of one memory operation is amortized over many floating point operations. Second, the workload should have sufficient parallelism to take advantage of the massive parallelization power in the hardware.. 

By these criteria, g-SpMM and g-SDDMM are inherently challenging workloads. First, each node's data is only used by its neighbors. With little data reuse, the resulting computation-to-memory-access ratio is low. Second, although there exist multiple ways to parallelize the g-SpMM and g-SDDMM operations -- by node, edge or feature, different strategies have pros and cons and there is no jack of all trades. Feature parallelization lets each thread compute on one feature and different ones can run in parallel. Although it is free of synchronization, the parallelism is limited by hidden size. For parallelization on nodes and edges, the optimal performance depends on numerous factors (see Table~\ref{tbl:parallel-summary}), including the preferred storage format, the specific computation pattern (i.e., g-SpMM or g-SDDMM), fine-grained synchronization to ensure atomicity (if required), degree of parallelism, and thus is ultimately both data and model dependent. DGL's current strategy is based on heuristics -- using node parallel for g-SpMM but edge parallel for g-SDDMM, and there is ample room to apply machine learning for performance optimization.

\begin{table}
\footnotesize
\caption{Summary of the node and edge parallel strategies.}
\label{tbl:parallel-summary}
\begin{tabularx}{\linewidth}{c|XX}
	\toprule
	& \textbf{Node Parallel} & \textbf{Edge Parallel} \\
	\midrule
	\textit{Schedule} & Each thread is in charge of the entire adjacency list of a node. & Each thread is in charge of one edge. \\
	\midrule
	\textit{Viability} & Any g-SpMM or g-SDDMM. & Any g-SDDMM and any g-SpMM with commutative and associative $\rho$. \\
	\midrule
	\textit{Preferred Format} & Compressed sparse row (CSR) due to fast lookup of adjacency list. & Coordinate list (COO) due to fast lookup of incident nodes. \\
	\midrule
	\textit{Need for synchronization} & No & No for g-SDDMM; g-SpMM requires atomic instructions for aggregating results to destination node memory. \\
	\midrule
	\textit{Workload Distribution} & Depend on node degrees & Balanced \\
	\midrule
	\textit{Parallelism} & Depend on number of nodes & Depend on number of edges \\
	\bottomrule
\end{tabularx}
\end{table}


\section{DGL system design}\label{sec:impl}
We now describe two important strategies we adopted in the development of DGL: 1) using graph as the central, user-friendly abstraction to allow deep optimization and 2) achieving maximum framework neutrality to enable seamless application integration.

\subsection{Graph as a first-class citizen}\label{s:design}

\begin{figure}
\centering
\includegraphics[width=\textwidth]{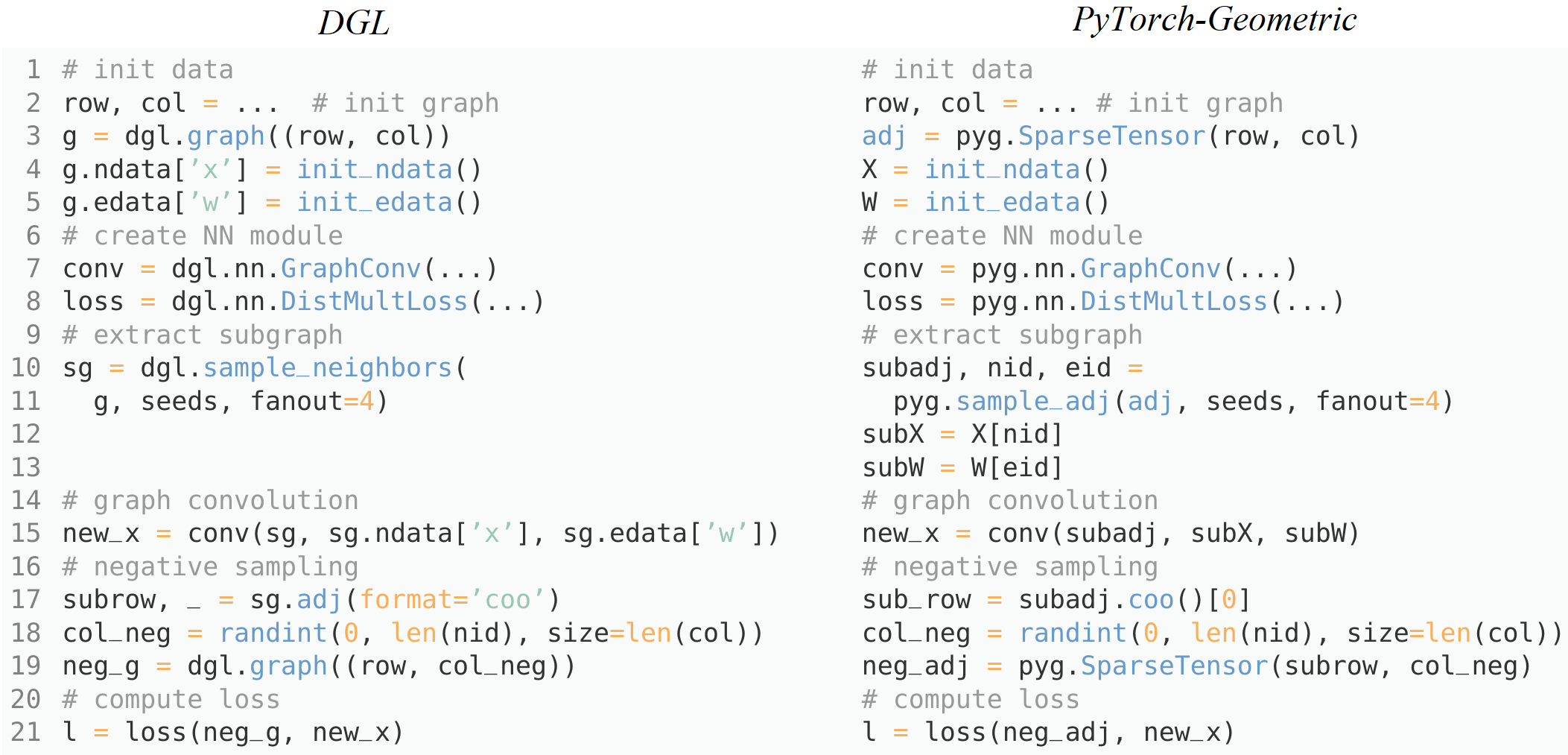}
\caption{Computing graph convolution on a subgraph in DGL and PyG.}
\label{fig:program}
\end{figure}

For a domain package designed for GNNs, it is natural to define graph as the central representation. While this is the consensus among different packages~\citep{pyg, graphnets, zhu2019aligraph, euler}, DGL differs in a number of places. First, it fully embraces an object-oriented programming style at the graph level. Second, recognizing the fact that research work on graphs are diverse and have developed a rich set of tools, it adopts an interface familiar to graph analytic experts. Third, the package exposes necessary low-level structures to advanced users when necessary. The fourth point, which we will discuss in Section~\ref{s:neutral}, is its extensive use of tensor data structure to allow seamless integration with base DL frameworks.

Figure~\ref{fig:program} compares the programming model of DGL and PyTorch Geometric (PyG)~\citep{pyg}. In DGL, \code{DGLGraph} is the key data structure created by the \code{dgl.graph} API (line 3).
Different models (e.g., \code{GraphConv} for graph convolution~\citep{kipf2017semi}, \code{GATConv} for graph attention model~\citep{gat}) operate on a \code{DGLGraph} directly (line 15); sampling, too, returns a \code{DGLGraph} object (line 10).
The returned subgraph object automatically extracts the features needed, saving the effort to manually slice from tensors (line 12--13). Consolidating graph operations in an object-oriented manner not only improves software consistency, but also enables performance enhancement transparent to users. For example, DGL automatically switches to use CSR or CSC formats for g-SpMM depending on whether it is forward or backward propagation, and uses COO for g-SDDMM.

Integrating graph with deep learning is a relatively young concept, but graph analytics has been a long-standing research field. There exist many sophisticated tools and packages. Many DGL APIs took inspiration from NetworkX~\citep{networkx} -- the NumPy-equivalent python package in graph analytics. Examples are topological query APIs implanted as class member functions such as \code{g.in\_degree} for getting node indegrees of a graph \code{g}. The two methods \code{g.edata} and \code{g.ndata} for accessing edge and node features are similarly inspired, with a dictionary-like interface that allows named tensors. Importantly, those APIs, while sharing naming convention with their NetworkX counterparts, have batched versions using tensor data structure. For example, NetworkX's \code{g.in\_degree} only supports querying the degree of one node at a time, while DGL supports querying multiple nodes by providing a tensor of node IDs. Finally, we note that these APIs are more than a matter of convenience. For instance, a \textit{heterogeneous graph} can have nodes or edges of different types, which can further have unaligned feature dimensions; it will be cumbersome to store them compactly in one tensor.

Finally, to maintain expressiveness and flexibility, it is important to expose internal structures so users can innovate beyond the APIs that \code{DGLGraph} offers. For instance, there have been a diverse number of studies~\citep{kotnis2017analysis, lukovnikov2017neural} on negative sampling strategies for the link prediction task. Users can craft these negative edges using the internal adjacency matrix of a DGLGraph object via the \code{g.adj} API (line 17--19 in Figure~\ref{fig:program}).

To define new GNN models, users can invoke the g-SpMM and g-SDDMM kernels via the \code{g.update\_all(}$\phi$, $\rho$\code{)} and \code{g.apply\_edges(}$\phi$\code{)} calls on a \code{DGLGraph}, with user-defined $\phi$ and $\rho$. In principle, a powerful compiler can parse any given functions and generate a fused kernel for execution. As the technique is yet to be developed (and thus is an active research), DGL provides a set of most common $\phi$ and $\rho$ as built-ins and generates kernels for each of the combination. Note all these kernels avoid materializing edge data to save memory, which is important for all GNN models where edge features are needed (Sec.~\ref{sec:eval}). 
For more complex user-defined functions (e.g., LSTM as a reduction function), DGL gathers node features to edges so users can compute messages in a batch. For the reduce phase, DGL groups nodes of the same degree into one bucket so that the received messages in each bucket can be stored in a dense tensor. DGL then invokes the user-defined reduce function repetitively on each bucket. This \textit{catch-all} strategy makes it easy for quickly prototyping model ideas on graphs of small sizes.



\subsection{Framework-neutral design}\label{s:neutral}
\begin{table}
\centering
\caption{The categorized number of lines of codes (LoC) need to change when porting a GNN layer in DGL from
PyTorch to TensorFlow. Code comments are excluded.}
\label{tbl:loc}
\footnotesize
\begin{tabularx}{0.8\linewidth}{p{5cm}|XXXp{1.5cm}}
  \toprule
 \multirow{2}{*}{\textbf{GNN Model}} & \multicolumn{3}{c}{\textbf{Change Type}} & \multirow{2}{*}{\shortstack{\bf Change / \\ \bf Total}} \\
  & \textbf{\textsc{I}} & \textbf{\textsc{II}} & \textbf{\textsc{III}} & \\ 
 \midrule
 GCN~\citep{kipf2017semi} & 4 & 12 & 14 & 30 / 103 \\ 
 GAT~\citep{gat} & 4 & 26 & 7 & 37 / 95 \\ 
 GraphSage~\citep{graphsage} & 4 & 13 & 8 & 25 / 85 \\
 GIN~\citep{xu2018powerful} & 4 & 4 & 0 & 8 / 37 \\
 SGC~\citep{sgc} & 4 & 4 & 4 & 12 / 50 \\
 \bottomrule
\end{tabularx}
\end{table}

A natural way to build a domain package is to build it on top of \textit{one} of the DL frameworks (e.g., PyTorch, TensorFlow, and MXNet). 
These mature frameworks already provide high-performance differentiable dense tensor operators, rich neural network modules and optimizers; there is scarcely any reason to reinvent the wheel. DGL makes a conscious decision to extend \textit{multiple} frameworks and, consequently, to be as framework-neutral as possible. Our belief is that a real-world, end-to-end GNN application will require other modules outside of GNN, and that they can be, or have already been, implemented in any framework of users’ choice. In addition, users may favor a particular framework simply because of its unique features.

Note that being framework-neutral is different from framework-agnostic. That is, while DGL has both PyTorch and TensorFlow backends, a PyTorch DGL model still needs to be modified if it is to be run in TensorFlow. Being completely framework-agnostic requires putting a shim over all conceivable operators across different frameworks, the cost of which is prohibitive. Instead, we adopt a practical approach and reduce framework dependencies as much as possible, while providing clear guidelines as where the changes are to be made. Importantly, DGL can achieve a high degree of framework neutrality in part due to the abstraction and implementation of \code{DGLGraph}. As we shall describe below and quantify through the models that we have implemented, the changes are often local and trivial. 


A complete GNN application includes data loading and preprocessing, GNN model setup, training loop and evaluation. In theory, they are all framework dependent. The goal of our design is to make model specification as portable as possible. Versions of the same model for different frameworks differ in three categories: \textsc{\bf (I)} model class inheritance (e.g., using \code{tensorflow.keras.layer.Layer} instead of \code{torch.nn.Module}); \textsc{\bf (II)} sub-modules used inside the model and parameter initialization; \textsc{\bf (III)} framework-specific operators (e.g., using \code{tensorflow.matmul} instead of \code{torch.matmul}). A mini-porting guide and example codes are included in the supplementary materials.

Table~\ref{tbl:loc} shows the number of lines of code to change when we port several GNN layers from PyTorch to TensorFlow in DGL. They account for roughly 20\%--40\% of the entire model implementations. Most of them are trivial modifications and are easy for developers versed in both frameworks. Importantly, all graph-related operations are unified and stay identical in different versions.

To achieve this level of framework neutrality with a minimum performance impact, DGL must decide what functionalities to delegate and re-direct to base frameworks, and otherwise judiciously take over the control. We summarize the main principles below; most of them shall be applicable to other domain packages that share the same aspiration.


\paragraph{Owning the minimum \& the critical.} From our experience, a domain package must maintain control at places where performance or usability matters the most. For DGL, it means sparse tensor storage management and operations. This leads to the decision of defining \code{DGLGraph} (see Section~\ref{s:design}). The first release of DGL used dense operations from frameworks to express sparse operations in GNNs and performed poorly. We decided to implemented sparse operations ourselves.

\paragraph{Leverage and delegate otherwise.} Most of DGL's APIs take framework tensors as input and perform dense operations on them. DGL defines a shim to map dense tensor operations to their framework-specific implementations. For instance, summing up the hidden state of all nodes is a common readout function for graph-level classification. To batch this readout operation we define a shim function \code{unsorted\_1d\_segment\_sum}, which translates to \code{unsorted\_segment\_sum} in Tensorflow and \code{scatter\_add} in PyTorch. Such remapping is in spirit similar to ONNX \citep{onnx}, but is designed specifically for DGL. 



To enable auto-differentiation, all computation involving node/edge features must be expressed with differentiable functions and operators. DGL defines custom functions that directly take the \texttt{DGLGraph} as well as node/edge features as inputs, and return node/edge features as outputs.  These operators are then registered as PyTorch/Tensorflow/MXNet auto-differentiable functions.


DGL also takes advantage of DLPack~\citep{dlpack}, an open-source in-memory tensor structure specification for sharing tensors among deep learning frameworks, to directly process and return DL framework tensors without data copying. Many frameworks, including Pytorch, MXNet, and TensorFlow, natively support DLPack.

The above functionality calls for memory allocation and management. DGL delegates memory management to the base frameworks. A base framework usually implements sophisticated memory management (e.g., to reduce memory allocation overhead and memory consumption), which is especially important for GPU memory. Because the output shape of a graph kernel is well determined before execution, DGL calculates the output memory size, allocates memory from the frameworks for the output tensors and pass the memory to the kernel for execution.

\section{Related Work}
\label{sec:related}

Due to the rising interests in GNNs, frameworks designed specifically for expressing and accelerating GNNs are developed. PyTorch-Geometric (PyG)~\citep{pyg} is an extension for geometric deep learning to the PyTorch framework. PyG's programming model is centered around sparse tensor abstraction. During message passing, it first \textit{gathers} node features to edges, applies user-defined message function and then \textit{scatters} them to the target nodes for aggregation. This scatter-gather pattern is inefficient due to generating large intermediate message tensors. GraphNet~\citep{graphnets} and AliGraph~\citep{zhu2019aligraph} are two TensorFlow packages for building GNN models. Both frameworks allow customizable message functions but the reducers are limited to TensorFlow's operators for segment reduction. Euler~\citep{euler} focuses on sampling-based mini-batching training on large graphs but lacks GPU support. NeuGraph~\citep{ma2019neugraph} accelerates GNN training by partitioning graphs to multiple GPUs. All of these systems are tied to one specific base DL framework.

There is a long line of work for optimizing sparse matrix operators such as sparse matrix-vector multiplication (SpMV) or sparse matrix-matrix multiplication (SpMM) on both CPU and GPU. These techniques range from studying and innovating new sparse matrix formats~\citep{bell2008efficient,filippone2017sparse}, advanced parallel pattern~\citep{yang2018design} to tiling and reordering~\citep{yang2011fast,baskaran2009optimizing} in the context of graph analytics~\citep{ashari2014fast,wang2016gunrock} or scientific computing applications~\citep{leveque2007finite}, just to name a few. Our work formally connects the area to GNN applications through the notions of generalized SpMM and SDDMM. We present the emerging challenges and hope to open up new innovations in this domain.

\section{Evaluation}
\label{sec:eval}

In this section, we compare DGL with other popular GNN frameworks: PyTorch-Geometric v1.5.0 (PyG) with PyTorch v1.5.0 as backend and GraphNet v1.1.0 with TensorFlow v2.2.0 as backend.\footnote{Benchmark scripts are available at \url{https://github.com/dglai/dgl-0.5-benchmark/}}


\subsection{Training speed}\label{ss:eval-speed}

We consider two benchmark tasks: node classification and link prediction, and two training methods: full graph training and mini-batch training. Node classification datasets include the \textsc{Reddit} graph from~\citep{graphsage}, the \textsc{ogbn-arxiv}, \textsc{ogbn-protein}, and \textsc{ogbn-product} graphs from the Open Graph Benchmarks (OGB)~\citep{ogb}. For link prediction, we use benchmarks from \textsc{MovieLens(ML)}~\citep{movielens}, the \textsc{ogbl-citation} and \textsc{ogbl-ppa} graphs from OGB.

To demonstrate the generality of DGL's optimizations, we benchmark a variety of state-of-the-art GNN models, including GCN~\citep{kipf2017semi}, GraphSAGE~\citep{graphsage}, GAT~\citep{gat}, R-GCN~\citep{rgcn} and GCMC~\citep{berg2017graph}. All the node classification tasks use cross entropy loss on the node representations learned by the GNN models while the link prediction tasks perform edge predictions by computing the dot-product of the source and destination node representations. For mini-batch training, we experiment with two sampling methods: neighbor sampling (NS)~\citep{graphsage} and cluster sampling (CS)~\citep{clustergcn}. The supplementary material includes additional details about the datasets and model configurations.

All experiments record the training time of one epoch averaged over 10 runs. For full graph training, we measure the training time on both CPU and GPU. The testbeds are one AWS EC2 p3.2xlarge instance (one NVidia V100 GPU with 16GB GPU RAM and 8 VCPUs) and one m5n.16xlarge instance (64 VCPUs and 256GB RAM) for experiments on GPU and CPU respectively. For mini-batch training, we perform sampling on CPU and copy the sampled subgraphs and features to GPU for training. The testbed is a p3.2xlarge instance.

\begin{figure}
\begin{floatrow}
\capbtabbox[0.45\textwidth]{
	\centering
	\scriptsize
	\begin{tabular}{c|c|c|c|c|c}
		\toprule
		\multirow{2}{*}{\textbf{Dataset}} & \multirow{2}{*}{\textbf{Model}} & \multicolumn{2}{c}{CPU} & \multicolumn{2}{c}{GPU} \\
		 &  & \textbf{DGL} & \textbf{PyG} & \textbf{DGL} & \textbf{PyG} \\
		\midrule
		\multicolumn{6}{c}{Node Classification} \\
		\midrule
		\textsc{Reddit} & SAGE & \textbf{13.80} & 99.47 & 0.432 & \textbf{0.403} \\
		\textsc{Reddit} & GAT & \textbf{9.15} & OOM & 0.718 & OOM \\
		\textsc{ogbn-arxiv} & SAGE & \textbf{3.31} & 8.389 & 0.104 & \textbf{0.098} \\
		\textsc{ogbn-arxiv} & GAT & \textbf{1.237} & 43.21 & \textbf{0.086} & 0.234 \\
		\textsc{ogbn-protein} & R-GCN & \textbf{26.31} & 373.8 & \textbf{0.706} & OOM \\
		\midrule
		\multicolumn{6}{c}{Link Prediction} \\
		\midrule
		\textsc{ML-100k} & GCMC & \textbf{0.064} & 1.569 & 0.021 & \textbf{0.012} \\
		\textsc{ML-1m} & GCMC & \textbf{0.351} & 40.47 & \textbf{0.045} & 0.103 \\
		\textsc{ML-10m} & GCMC & \textbf{5.08} & OOM & \textbf{0.412} & OOM \\
		\bottomrule
	\end{tabular}
}{
\caption{Epoch running time in seconds (full graph training). OOM means out-of-memory.}
\label{tbl:full-eval}
}
\hspace{0.9cm}
\capbtabbox[0.45\textwidth]{
	\centering
	\scriptsize
	\begin{tabular}{c|c|c|c}
		\toprule
		\textbf{Dataset} & \textbf{Model} & \textbf{DGL} & \textbf{PyG} \\
		\midrule
		\multicolumn{4}{c}{Node Classification} \\
		\midrule
		\textsc{Reddit} & SAGE w/ NS & \textbf{19.90} & 20.45 \\
		\textsc{Reddit} & GAT w/ NS & \textbf{21.07} & 21.89 \\
		\textsc{ogbn-product} & SAGE w/ NS & \textbf{33.34} & 35.00 \\
		\textsc{ogbn-product} & GAT w/ NS & \textbf{67.0} & 187.0 \\
		\textsc{ogbn-product} & SAGE w/ CS & 8.887 & \textbf{8.614} \\
		\textsc{ogbn-product} & GAT w/ CS & \textbf{14.50} & 58.36 \\
		\midrule
		\multicolumn{4}{c}{Link Prediction} \\
		\midrule
		\textsc{ogbl-citation} & GCN w/ CS & \textbf{5.772} & 6.287 \\
		\textsc{ogbl-citation} & GAT w/ CS & \textbf{6.081} & 8.290 \\
		\textsc{ogbl-ppa} & GCN w/ CS & \textbf{5.782} & 6.421 \\
		\textsc{ogbl-ppa} & GAT w/ CS & \textbf{6.224} & 8.198 \\
		\bottomrule
	\end{tabular}
}{
\caption{Epoch running time in seconds for mini-batch training using neighbor sampling (NS) and cluster sampling (CS).}
\label{tbl:sample-eval}
    }
\end{floatrow}
\end{figure}

Table~\ref{tbl:full-eval} shows the results of full graph training. For GraphSAGE on GPU, both DGL and PyG use the vendor-provided cuSPARSE~\citep{cusparse} library for computing SpMM so the performance is similar. DGL is slower by a small margin (2--11\%) due to framework overhead. For GAT, DGL is 1.68$\times$ faster than PyG on \textsc{ogbn-arxiv} because DGL's g-SpMM kernel avoids generating message tensors while PyG's scatter-gather kernel does. This also explains PyG running out-of-memory on \textsc{ogbn-protein} due to the graph being the densest one among all and having edge features. Link prediction benchmarks show similar results. DGL is slower on small graphs (e.g., \textsc{ML-100K}) but is 1.83$\times$ faster on \textsc{ML-1M}. DGL can train on \textsc{ML-10M} while PyG runs out of memory. On CPU, DGL outperforms PyG on all benchmarks by 1.9$\times$--64$\times$. This is attributing to the high CPU utilization (50\%) of DGL's g-SpMM and g-SDDMM kernels using multi-threading compared with PyG's (only 10\%). The large gap of \textsc{ML-1M} is further caused by the huge intermediate message tensor, resulting in a lot of time spent in memory traffic.

Table~\ref{tbl:sample-eval} evaluates the performance of mini-batch training. Compared with full graph training, the total training time also depends on the cost in sample preparation including the sampling operations and data movement from CPU to GPU. For neighbor sampling (NS), sample generation and data movement can occupy up to 85\% of the total training time, which explains why DGL and PyG have similar performance. By contrast, cluster sampling (CS) is much faster and the benefit from DGL's optimized kernels gives an 1.56$\times$ speedup for training GAT. DGL beats PyG in all link prediction benchmarks due to the use of g-SDDMM kernel in computing predictions on edges.

\subsection{Memory consumption}\label{ss:eval-mem}
\begin{figure}[t]
\begin{floatrow}
\ffigbox[0.66\textwidth]{
    \centering
    \begin{subfigure}[t]{0.47\linewidth}
    	\centering
    	\includegraphics[width=\textwidth]{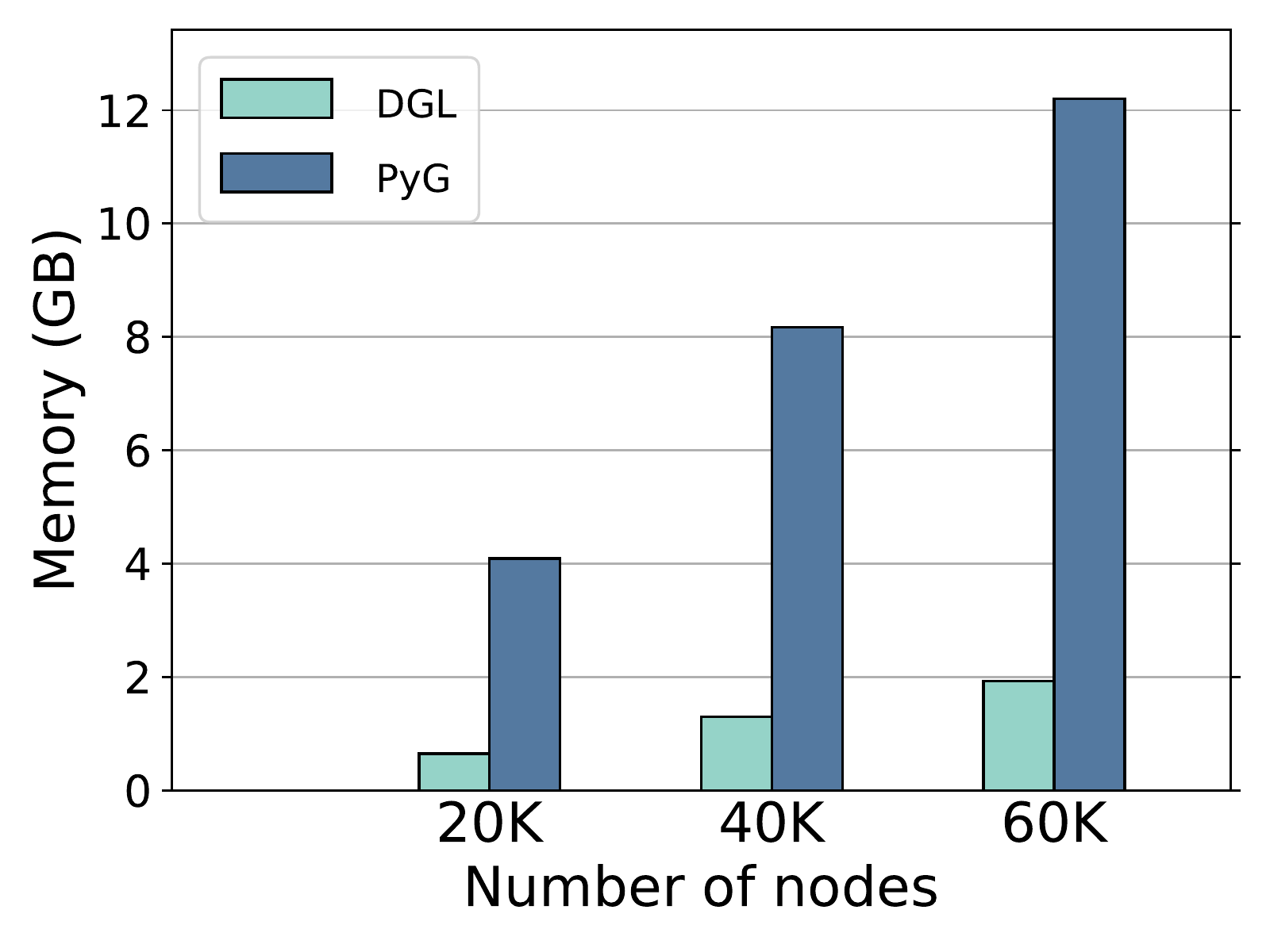}
    	\caption{}
    	\label{fig:mem-scale}
    \end{subfigure}
    \hfill
    \begin{subfigure}[t]{0.47\linewidth}
        \centering
        \includegraphics[width=\textwidth]{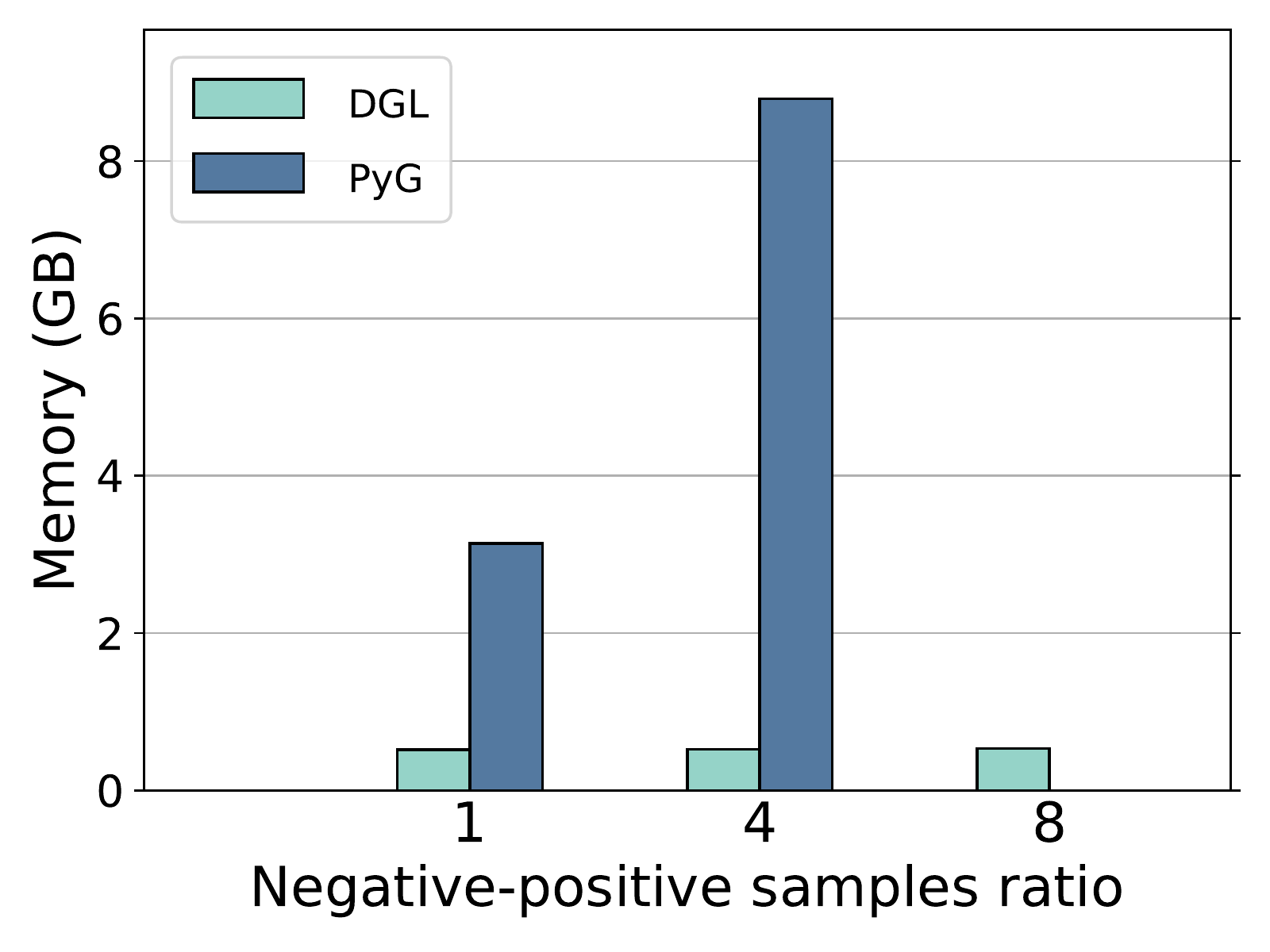}
        \caption{}
    	\label{fig:mem-neg}
    \end{subfigure}
}{
    \caption{Memory usage of PyG and DGL. (a) GAT on synthetic graphs; (b) GCN w/ CS on  \textsc{ogbl-citation}.}
    \label{fig:mem}
}
\hfill
\floatbox{figure}[.33\textwidth][\FBheight][t]{
    \centering
    \includegraphics[width=\linewidth]{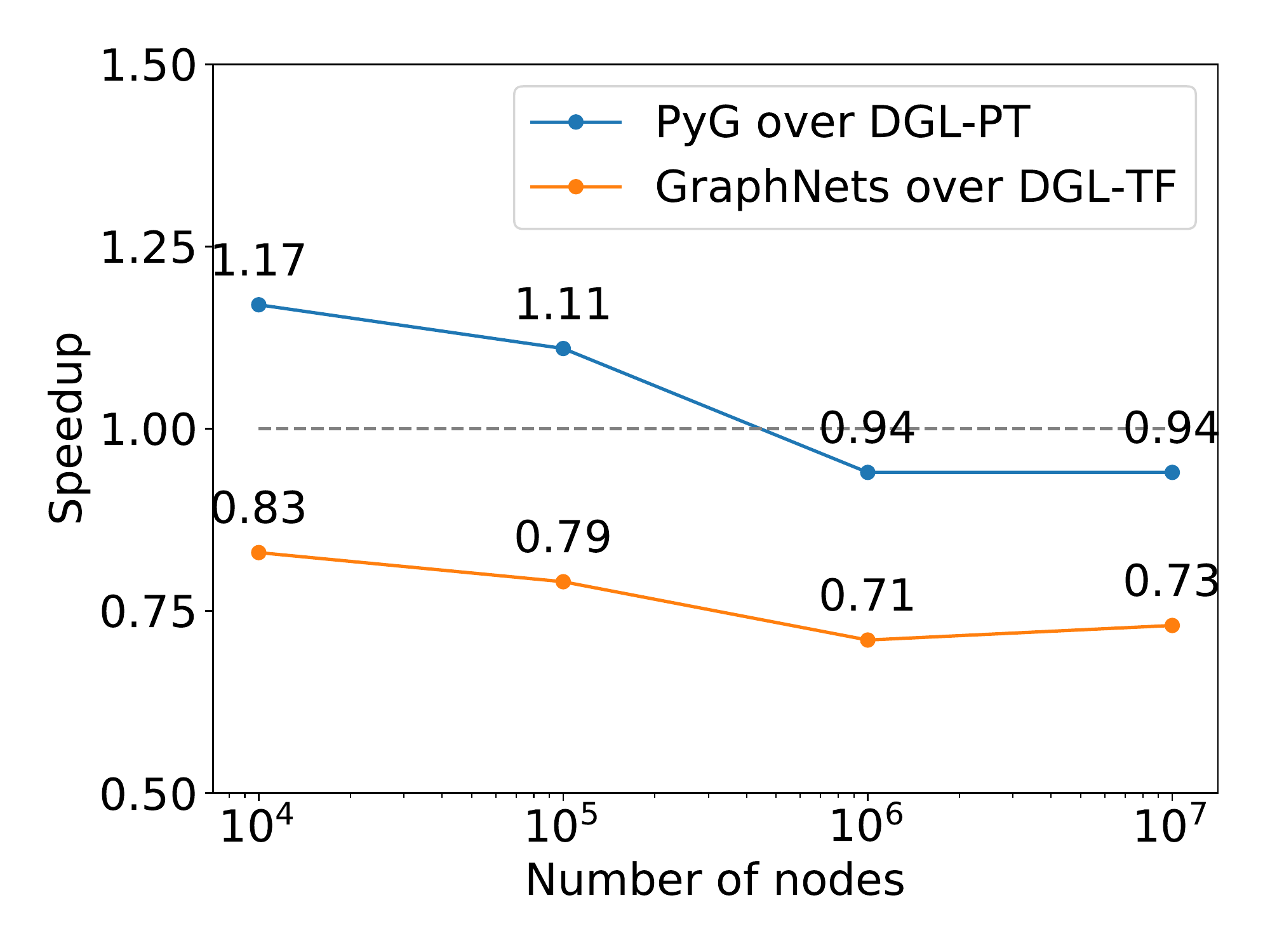}
}{
    \caption{Raw framework speedup of PyG and GraphNets over DGL.}
    \label{fig:framework_overhead}
}
\end{floatrow}
\end{figure}

To illustrate the advantage of DGL's g-SpMM and g-SDDMM kernels in reducing memory traffic, we further studied the memory usage of DGL and PyG. We trained a 3-layer GAT model with one attention head on a set of synthetic graphs of different scales. The average degree is fixed at 20 so the number of edges grows linearly with the number of nodes. Figure~\ref{fig:mem-scale} shows that PyG consumes 6.3$\times$ more memory than DGL and runs out of memory on graphs of more than 60K nodes. DGL manages to keep a low memory footprint due to its g-SpMM kernel fusing the message computation with aggregation. We further investigate the case of link prediction using a 3-layer GCN model with cluster sampling on the \textsc{ogb-citation} dataset. The model computes a prediction on each edge by performing a dot-product of its source and destination node representations, which is a typical SDDMM operation. Figure~\ref{fig:mem-neg} shows the memory usage by increasing the number of negative edges per positive ones at each mini-batch. DGL's memory consumption stays the same regardless of the number of negative samples while PyG quickly runs out of memory. Since negative sampling is universal in link prediction tasks, we expect the phenomenon to appear in other benchmarks as well.




\subsection{Framework overhead}\label{ss:eval-overhead}

We compared DGL's framework overhead with both PyG and GraphNets using PyTorch and TensorFlow as backends, respectively. In order to eliminate the impact of message passing kernels, we trained an one-layer GCN over a synthetically generated chain graph and measured the epoch time. We varied the number of nodes and plotted the speedup of PyG and GraphNets over DGL in Figure~\ref{fig:framework_overhead}. Ideally, the speedup should be one. We observed a 17\% overhead compared with PyG when the graph is very small, but as the graph size increases, the overhead becomes negligible. The overhead is due to DGL registering message passing kernels via Python to keep implementation independent of frameworks while PyG can register them in C++. Interestingly, DGL-TF is faster than GraphNets, which uses native TensorFlow operators, demonstrating the viability of a framework-neutral package with low overhead.

\section{Conclusion}

We present Deep Graph Library (DGL), a system specialized for deep learning models on graphs. DGL identifies the connection between sparse matrix computation and the message passing paradigm in graph neural networks, and consolidates these operations into generalized sparse-dense matrix multiplication (g-SpMM) and sampled dense-dense matrix multiplication (g-SDDMM). DGL explores a wide range of parallelization strategies, leading to its superior speed and memory efficiency. DGL presents two design principles. By having graph as the core programming abstraction, DGL can hide cumbersome details from users and perform optimization transparently. DGL's lessons in designing a framework-neutral domain package with low overhead shall also be applicable to other packages of the same kind.

\bibliography{ref}

\begin{thebibliography}{43}
\providecommand{\natexlab}[1]{#1}
\providecommand{\url}[1]{\texttt{#1}}
\expandafter\ifx\csname urlstyle\endcsname\relax
  \providecommand{\doi}[1]{doi: #1}\else
  \providecommand{\doi}{doi: \begingroup \urlstyle{rm}\Url}\fi

\bibitem[dlp(2017)]{dlpack}
Dlpack.
\newblock \url{https://github.com/dmlc/dlpack}, 2017.

\bibitem[onn(2018)]{onnx}
Open neural network exchange format.
\newblock \url{https://github.com/onnx/onnx}, 2018.

\bibitem[Abadi et~al.(2016)Abadi, Barham, Chen, Chen, Davis, Dean, Devin,
  Ghemawat, Irving, Isard, et~al.]{tensorflow}
Mart{\'\i}n Abadi, Paul Barham, Jianmin Chen, Zhifeng Chen, Andy Davis, Jeffrey
  Dean, Matthieu Devin, Sanjay Ghemawat, Geoffrey Irving, Michael Isard, et~al.
\newblock Tensorflow: A system for large-scale machine learning.
\newblock In \emph{OSDI}, volume~16, pp.\  265--283, 2016.

\bibitem[Alibaba(2019)]{euler}
Alibaba.
\newblock Euler.
\newblock \url{https://github.com/alibaba/euler}, 2019.

\bibitem[Ashari et~al.(2014)Ashari, Sedaghati, Eisenlohr, Parthasarath, and
  Sadayappan]{ashari2014fast}
Arash Ashari, Naser Sedaghati, John Eisenlohr, Srinivasan Parthasarath, and
  P~Sadayappan.
\newblock Fast sparse matrix-vector multiplication on gpus for graph
  applications.
\newblock In \emph{SC'14: Proceedings of the International Conference for High
  Performance Computing, Networking, Storage and Analysis}, pp.\  781--792.
  IEEE, 2014.

\bibitem[Baskaran \& Bordawekar(2009)Baskaran and
  Bordawekar]{baskaran2009optimizing}
Muthu~Manikandan Baskaran and Rajesh Bordawekar.
\newblock Optimizing sparse matrix-vector multiplication on gpus.
\newblock \emph{IBM Research Report RC24704}, \penalty0 (W0812-047), 2009.

\bibitem[Battaglia et~al.(2018)Battaglia, Hamrick, Bapst, Sanchez-Gonzalez,
  Zambaldi, Malinowski, Tacchetti, Raposo, Santoro, Faulkner,
  et~al.]{graphnets}
Peter~W Battaglia, Jessica~B Hamrick, Victor Bapst, Alvaro Sanchez-Gonzalez,
  Vinicius Zambaldi, Mateusz Malinowski, Andrea Tacchetti, David Raposo, Adam
  Santoro, Ryan Faulkner, et~al.
\newblock Relational inductive biases, deep learning, and graph networks.
\newblock \emph{arXiv preprint arXiv:1806.01261}, 2018.

\bibitem[Bell \& Garland(2008)Bell and Garland]{bell2008efficient}
Nathan Bell and Michael Garland.
\newblock Efficient sparse matrix-vector multiplication on cuda.
\newblock Technical report, Nvidia Technical Report NVR-2008-004, Nvidia
  Corporation, 2008.

\bibitem[Berg et~al.(2017)Berg, Kipf, and Welling]{berg2017graph}
Rianne van~den Berg, Thomas~N Kipf, and Max Welling.
\newblock Graph convolutional matrix completion.
\newblock \emph{arXiv preprint arXiv:1706.02263}, 2017.

\bibitem[Bruna et~al.(2013)Bruna, Zaremba, Szlam, and LeCun]{bruna2013spectral}
Joan Bruna, Wojciech Zaremba, Arthur Szlam, and Yann LeCun.
\newblock Spectral networks and locally connected networks on graphs.
\newblock \emph{arXiv preprint arXiv:1312.6203}, 2013.

\bibitem[Chen et~al.(2015)Chen, Li, Li, Lin, Wang, Wang, Xiao, Xu, Zhang, and
  Zhang]{mxnet}
Tianqi Chen, Mu~Li, Yutian Li, Min Lin, Naiyan Wang, Minjie Wang, Tianjun Xiao,
  Bing Xu, Chiyuan Zhang, and Zheng Zhang.
\newblock Mxnet: A flexible and efficient machine learning library for
  heterogeneous distributed systems.
\newblock \emph{arXiv preprint arXiv:1512.01274}, 2015.

\bibitem[Chiang et~al.(2019)Chiang, Liu, Si, Li, Bengio, and Hsieh]{clustergcn}
Wei-Lin Chiang, Xuanqing Liu, Si~Si, Yang Li, Samy Bengio, and Cho-Jui Hsieh.
\newblock Cluster-gcn: An efficient algorithm for training deep and large graph
  convolutional networks.
\newblock In \emph{Proceedings of the 25th ACM SIGKDD International Conference
  on Knowledge Discovery \& Data Mining}, pp.\  257--266, 2019.

\bibitem[Defferrard et~al.(2016)Defferrard, Bresson, and
  Vandergheynst]{defferrard2016convolutional}
Micha{\"e}l Defferrard, Xavier Bresson, and Pierre Vandergheynst.
\newblock Convolutional neural networks on graphs with fast localized spectral
  filtering.
\newblock In \emph{Advances in neural information processing systems}, pp.\
  3844--3852, 2016.

\bibitem[Fey \& Lenssen(2019)Fey and Lenssen]{pyg}
Matthias Fey and Jan~E. Lenssen.
\newblock Fast graph representation learning with {PyTorch Geometric}.
\newblock \emph{CoRR}, abs/1903.02428, 2019.

\bibitem[Filippone et~al.(2017)Filippone, Cardellini, Barbieri, and
  Fanfarillo]{filippone2017sparse}
Salvatore Filippone, Valeria Cardellini, Davide Barbieri, and Alessandro
  Fanfarillo.
\newblock Sparse matrix-vector multiplication on gpgpus.
\newblock \emph{ACM Transactions on Mathematical Software (TOMS)}, 43\penalty0
  (4):\penalty0 1--49, 2017.

\bibitem[Gilmer et~al.(2017)Gilmer, Schoenholz, Riley, Vinyals, and Dahl]{mpnn}
Justin Gilmer, Samuel~S Schoenholz, Patrick~F Riley, Oriol Vinyals, and
  George~E Dahl.
\newblock Neural message passing for quantum chemistry.
\newblock In \emph{International Conference on Machine Learning}, 2017.

\bibitem[Hagberg et~al.(2008)Hagberg, Swart, and S~Chult]{networkx}
Aric Hagberg, Pieter Swart, and Daniel S~Chult.
\newblock Exploring network structure, dynamics, and function using networkx.
\newblock Technical report, Los Alamos National Lab.(LANL), Los Alamos, NM
  (United States), 2008.

\bibitem[Hamilton et~al.(2017)Hamilton, Ying, and Leskovec]{graphsage}
Will Hamilton, Zhitao Ying, and Jure Leskovec.
\newblock Inductive representation learning on large graphs.
\newblock In \emph{Advances in Neural Information Processing Systems}, pp.\
  1024--1034, 2017.

\bibitem[Hamilton et~al.(2018)Hamilton, Bajaj, Zitnik, Jurafsky, and
  Leskovec]{hamilton2018embedding}
Will Hamilton, Payal Bajaj, Marinka Zitnik, Dan Jurafsky, and Jure Leskovec.
\newblock Embedding logical queries on knowledge graphs.
\newblock In \emph{Advances in Neural Information Processing Systems}, pp.\
  2030--2041, 2018.

\bibitem[Harper \& Konstan(2015)Harper and Konstan]{movielens}
F~Maxwell Harper and Joseph~A Konstan.
\newblock The movielens datasets: History and context.
\newblock \emph{Acm transactions on interactive intelligent systems (tiis)},
  5\penalty0 (4):\penalty0 1--19, 2015.

\bibitem[Hu et~al.(2020)Hu, Fey, Zitnik, Dong, Ren, Liu, Catasta, and
  Leskovec]{ogb}
Weihua Hu, Matthias Fey, Marinka Zitnik, Yuxiao Dong, Hongyu Ren, Bowen Liu,
  Michele Catasta, and Jure Leskovec.
\newblock Open graph benchmark: Datasets for machine learning on graphs.
\newblock \emph{arXiv preprint arXiv:2005.00687}, 2020.

\bibitem[Kepner et~al.(2016)Kepner, Aaltonen, Bader, Bulu{\c{c}}, Franchetti,
  Gilbert, Hutchison, Kumar, Lumsdaine, Meyerhenke, et~al.]{graphblas}
Jeremy Kepner, Peter Aaltonen, David Bader, Aydin Bulu{\c{c}}, Franz
  Franchetti, John Gilbert, Dylan Hutchison, Manoj Kumar, Andrew Lumsdaine,
  Henning Meyerhenke, et~al.
\newblock Mathematical foundations of the graphblas.
\newblock In \emph{2016 IEEE High Performance Extreme Computing Conference
  (HPEC)}, pp.\  1--9. IEEE, 2016.

\bibitem[Kipf \& Welling(2017)Kipf and Welling]{kipf2017semi}
Thomas~N. Kipf and Max Welling.
\newblock Semi-supervised classification with graph convolutional networks.
\newblock In \emph{International Conference on Learning Representations
  (ICLR)}, 2017.

\bibitem[Kotnis \& Nastase(2017)Kotnis and Nastase]{kotnis2017analysis}
Bhushan Kotnis and Vivi Nastase.
\newblock Analysis of the impact of negative sampling on link prediction in
  knowledge graphs.
\newblock \emph{arXiv preprint arXiv:1708.06816}, 2017.

\bibitem[LeVeque(2007)]{leveque2007finite}
Randall~J LeVeque.
\newblock \emph{Finite difference methods for ordinary and partial differential
  equations: steady-state and time-dependent problems}, volume~98.
\newblock Siam, 2007.

\bibitem[Lukovnikov et~al.(2017)Lukovnikov, Fischer, Lehmann, and
  Auer]{lukovnikov2017neural}
Denis Lukovnikov, Asja Fischer, Jens Lehmann, and S{\"o}ren Auer.
\newblock Neural network-based question answering over knowledge graphs on word
  and character level.
\newblock In \emph{Proceedings of the 26th international conference on World
  Wide Web}, pp.\  1211--1220, 2017.

\bibitem[Ma et~al.(2019)Ma, Yang, Miao, Xue, Wu, Zhou, and Dai]{ma2019neugraph}
Lingxiao Ma, Zhi Yang, Youshan Miao, Jilong Xue, Ming Wu, Lidong Zhou, and
  Yafei Dai.
\newblock Neugraph: parallel deep neural network computation on large graphs.
\newblock In \emph{2019 $\{$USENIX$\}$ Annual Technical Conference
  ($\{$USENIX$\}$$\{$ATC$\}$ 19)}, pp.\  443--458, 2019.

\bibitem[Naumov et~al.(2010)Naumov, Chien, Vandermersch, and Kapasi]{cusparse}
M~Naumov, LS~Chien, P~Vandermersch, and U~Kapasi.
\newblock Cusparse library.
\newblock In \emph{GPU Technology Conference}, 2010.

\bibitem[Paszke et~al.(2019)Paszke, Gross, Massa, Lerer, Bradbury, Chanan,
  Killeen, Lin, Gimelshein, Antiga, et~al.]{pytorch}
Adam Paszke, Sam Gross, Francisco Massa, Adam Lerer, James Bradbury, Gregory
  Chanan, Trevor Killeen, Zeming Lin, Natalia Gimelshein, Luca Antiga, et~al.
\newblock Pytorch: An imperative style, high-performance deep learning library.
\newblock In \emph{Advances in Neural Information Processing Systems}, pp.\
  8024--8035, 2019.

\bibitem[Qi et~al.(2017)Qi, Yi, Su, and Guibas]{qi2017pointnet++}
Charles~Ruizhongtai Qi, Li~Yi, Hao Su, and Leonidas~J Guibas.
\newblock Pointnet++: Deep hierarchical feature learning on point sets in a
  metric space.
\newblock In \emph{Advances in neural information processing systems}, pp.\
  5099--5108, 2017.

\bibitem[Scarselli et~al.(2009)Scarselli, Gori, Tsoi, Hagenbuchner, and
  Monfardini]{scarselli2009}
Franco Scarselli, Marco Gori, Ah~Chung Tsoi, Markus Hagenbuchner, and Gabriele
  Monfardini.
\newblock The graph neural network model.
\newblock \emph{IEEE Transactions on Neural Networks}, 20\penalty0
  (1):\penalty0 61--80, 2009.

\bibitem[Schlichtkrull et~al.(2017)Schlichtkrull, Kipf, Bloem, Berg, Titov, and
  Welling]{rgcn}
Michael Schlichtkrull, Thomas~N Kipf, Peter Bloem, Rianne van~den Berg, Ivan
  Titov, and Max Welling.
\newblock Modeling relational data with graph convolutional networks.
\newblock \emph{arXiv preprint arXiv:1703.06103}, 2017.

\bibitem[Schlichtkrull et~al.(2018)Schlichtkrull, Kipf, Bloem, Van Den~Berg,
  Titov, and Welling]{schlichtkrull2018modeling}
Michael Schlichtkrull, Thomas~N Kipf, Peter Bloem, Rianne Van Den~Berg, Ivan
  Titov, and Max Welling.
\newblock Modeling relational data with graph convolutional networks.
\newblock In \emph{European Semantic Web Conference}, pp.\  593--607. Springer,
  2018.

\bibitem[Vaswani et~al.(2017)Vaswani, Shazeer, Parmar, Uszkoreit, Jones, Gomez,
  Kaiser, and Polosukhin]{vaswani2017attention}
Ashish Vaswani, Noam Shazeer, Niki Parmar, Jakob Uszkoreit, Llion Jones,
  Aidan~N Gomez, {\L}ukasz Kaiser, and Illia Polosukhin.
\newblock Attention is all you need.
\newblock In \emph{Advances in neural information processing systems}, pp.\
  5998--6008, 2017.

\bibitem[Veli{\v{c}}kovi{\'{c}} et~al.(2018)Veli{\v{c}}kovi{\'{c}}, Cucurull,
  Casanova, Romero, Li{\`{o}}, and Bengio]{gat}
Petar Veli{\v{c}}kovi{\'{c}}, Guillem Cucurull, Arantxa Casanova, Adriana
  Romero, Pietro Li{\`{o}}, and Yoshua Bengio.
\newblock {Graph Attention Networks}.
\newblock \emph{International Conference on Learning Representations}, 2018.

\bibitem[Wang et~al.(2016)Wang, Davidson, Pan, Wu, Riffel, and
  Owens]{wang2016gunrock}
Yangzihao Wang, Andrew Davidson, Yuechao Pan, Yuduo Wu, Andy Riffel, and John~D
  Owens.
\newblock Gunrock: A high-performance graph processing library on the gpu.
\newblock In \emph{Proceedings of the 21st ACM SIGPLAN Symposium on Principles
  and Practice of Parallel Programming}, pp.\  1--12, 2016.

\bibitem[Wu et~al.(2019)Wu, Zhang, Souza~Jr, Fifty, Yu, and Weinberger]{sgc}
Felix Wu, Tianyi Zhang, Amauri Holanda~de Souza~Jr, Christopher Fifty, Tao Yu,
  and Kilian~Q Weinberger.
\newblock Simplifying graph convolutional networks.
\newblock \emph{arXiv preprint arXiv:1902.07153}, 2019.

\bibitem[Xu et~al.(2018)Xu, Hu, Leskovec, and Jegelka]{xu2018powerful}
Keyulu Xu, Weihua Hu, Jure Leskovec, and Stefanie Jegelka.
\newblock How powerful are graph neural networks?
\newblock \emph{arXiv preprint arXiv:1810.00826}, 2018.

\bibitem[Yang et~al.(2018)Yang, Bulu{\c{c}}, and Owens]{yang2018design}
Carl Yang, Ayd{\i}n Bulu{\c{c}}, and John~D Owens.
\newblock Design principles for sparse matrix multiplication on the gpu.
\newblock In \emph{European Conference on Parallel Processing}, pp.\  672--687.
  Springer, 2018.

\bibitem[Yang et~al.(2011)Yang, Parthasarathy, and Sadayappan]{yang2011fast}
Xintian Yang, Srinivasan Parthasarathy, and Ponnuswamy Sadayappan.
\newblock Fast sparse matrix-vector multiplication on gpus: implications for
  graph mining.
\newblock \emph{arXiv preprint arXiv:1103.2405}, 2011.

\bibitem[Ying et~al.(2018)Ying, He, Chen, Eksombatchai, Hamilton, and
  Leskovec]{pinsage}
Rex Ying, Ruining He, Kaifeng Chen, Pong Eksombatchai, William~L Hamilton, and
  Jure Leskovec.
\newblock Graph convolutional neural networks for web-scale recommender
  systems.
\newblock \emph{KDD}, 2018.

\bibitem[Zhu et~al.(2019)Zhu, Zhao, Yang, Lin, Zhou, Ai, Li, and
  Zhou]{zhu2019aligraph}
Rong Zhu, Kun Zhao, Hongxia Yang, Wei Lin, Chang Zhou, Baole Ai, Yong Li, and
  Jingren Zhou.
\newblock Aligraph: a comprehensive graph neural network platform.
\newblock \emph{Proceedings of the VLDB Endowment}, 12\penalty0 (12):\penalty0
  2094--2105, 2019.

\bibitem[Zitnik et~al.(2018)Zitnik, Agrawal, and Leskovec]{zitnik2018modeling}
Marinka Zitnik, Monica Agrawal, and Jure Leskovec.
\newblock Modeling polypharmacy side effects with graph convolutional networks.
\newblock \emph{Bioinformatics}, 34\penalty0 (13):\penalty0 i457--i466, 2018.

\end{thebibliography}
\bibliographystyle{iclr2020_conference}

\newpage

\begin{appendices}
\section{Nomenclatures}

In the Appendix we adopt the following nomenclatures for representing different mathematical objects:

\begin{itemize}
    \item $a$: a scalar
    \item $\mathbf{a}$: a (column) vector
    \item $\mathbf{A}$: a matrix
    \item $\mathbf{a}_i$: the $i$-th row of matrix $\mathbf{A}$
    \item $\mathbb{R}^{m\times n}$: the set of real matrices with $m$ rows and $n$ columns.
    \item $\lbrace x : y \rbrace$ the set with all mathematical objects $x$ that satisfies condition $y$.
    \item $f(\cdot, \cdot, \dots)$: a function.
    \item $f : \mathcal{X} \mapsto \mathcal{Y}$: a function that maps from set $\mathcal{X}$ to $\mathcal{Y}$
    \item $\pp{\mathbf{y}}{\mathbf{x}}$ or $\nabla_\mathbf{x} \mathbf{y}$: the Jacobian of $\mathbf{y}$ with respect to $\mathbf{x}$, in numerator layout.
    \item $\pp{f}{\mathbf{x}}$: the partial derivative of scalar function $f$ with respect to vector $\mathbf{x}$, in numerator layout.  Note that in numerator layout $\pp{f}{\mathbf{x}} =
    \begin{bmatrix}
    \pp{f}{x_1} & \pp{f}{x_2} & \cdots
    \end{bmatrix}
    $ is a row vector.
    \item $\pp{f}{\mathbf{A}}$: the partial derivative of scalar function $f$ with respect to matrix $\mathbf{A}$.
    \item $\left[\mathbf{A} ; \mathbf{B} ; \cdots\right] = \begin{bmatrix}
    \mathbf{A} & \mathbf{B} & \dots
    \end{bmatrix}$: horizontal concatenation of matrices.
\end{itemize}

\section{Gradient of g-SpMM and g-SDDMM}

We go by reviewing the definition of g-SpMM and g-SDDMM:

\begin{defin}
    A generalized SDDMM (g-SDDMM) defined on graph $\mathcal{G}$ with message function $\phi_m$ is a function
    $$
    \gSDDMM_{\mathcal{G}, \phi_m}: 
        \mathbb{R}^{|\mathcal{V}| \times d_1},
        \mathbb{R}^{|\mathcal{V}| \times d_2},
        \mathbb{R}^{|\mathcal{E}| \times d_3}
     \mapsto \mathbb{R}^{|\mathcal{E}| \times d_4}
    $$
    where the output edge representations $\mathbf{M} = \gSDDMM_{\mathcal{G}, \phi_m}\left(\mathbf{X}, \mathbf{Y}, \mathbf{W}\right)$ are computed from the edges' own features, as well as features of their incident nodes:
    $$
    \mathbf{m}_e =
    \phi_m\left( \mathbf{x}_u, \mathbf{y}_v, \mathbf{w}_e \right), \quad \forall \edge{u}{v}{e} \in \mathcal{E}.
    $$
\end{defin}

\begin{defin}
    A generalized SpMM (g-SpMM) defined on graph $\mathcal{G}$ with message function $\phi_z$ and reduce function $\rho$ is a function
    $$
    \gSpMM_{\mathcal{G}, \phi_z, \rho}: 
        \mathbb{R}^{|\mathcal{V}| \times d_1},
        \mathbb{R}^{|\mathcal{V}| \times d_2},
        \mathbb{R}^{|\mathcal{E}| \times d_3}
     \mapsto \mathbb{R}^{|\mathcal{V}| \times d_4}
    $$
    where the output node representations $\mathbf{Z} = \gSpMM_{\mathcal{G}, \phi_z, \rho}\left(\mathbf{X}, \mathbf{Y}, \mathbf{W}\right)$ are computed from the nodes' inbound edge features, the node features themselves, and the neighbor features:
    $$
    \mathbf{z}_v =
    \rho \left( \left\lbrace \phi_z\left( \mathbf{x}_u, \mathbf{y}_v, \mathbf{w}_e \right) : \edge{u}{v}{e} \in \mathcal{E} \right\rbrace \right), \quad \forall v \in \mathcal{V}.
    $$
\end{defin}

We also review the formal definition of the reverse graph.

\begin{defin}
    Given the graph $\mathcal{G} = (\mathcal{V}, \mathcal{E})$, the reverse graph $\Tilde{\mathcal{G}} = (\mathcal{V}, \Tilde{\mathcal{E}})$, where $\Tilde{\mathcal{E}} = \left\lbrace
    \edge{u}{v}{e} : \edge{v}{u}{e} \in \mathcal{E}
    \right\rbrace$ contains the corresponding edges reversing directions.
\end{defin}

We then show that the gradient of g-SpMM and g-SDDMM functions can also be expressed as g-SpMM and g-SDDMM functions.

\begin{lem}
    Assume we are given the g-SDDMM function
    $$
    \mathbf{M} = \gSDDMM_{\mathcal{G}, \phi_m}\left(\mathbf{X}, \mathbf{Y}, \mathbf{W}\right)
    $$
    defined on graph $\mathcal{G}$ with message function $\phi_m$
    and the objective function $\mathcal{L} = \ell(\mathbf{M})$.  There exists a function
    $$
    \phi'_w :
        \mathbb{R}^{|\mathcal{V}| \times d_1},
        \mathbb{R}^{|\mathcal{V}| \times d_2},
        \mathbb{R}^{|\mathcal{E}| \times \left(d_3 + d_4\right)}
     \mapsto \mathbb{R}^{|\mathcal{E}| \times d_3}
    $$
    such that
    $$
    \pp{\mathcal{L}}{\mathbf{W}} = \gSDDMM_{\mathcal{G}, \phi'_w}
    \left(
        \mathbf{X}, \mathbf{Y}, \left[\mathbf{W}; \dpp{\mathcal{L}}{\mathbf{M}}\right]
    \right)
    $$
\end{lem}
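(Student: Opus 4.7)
The plan is to apply the chain rule edge by edge and observe that the resulting expression has exactly the structural form required by a g-SDDMM. The crucial observation is that in the definition of $\gSDDMM$, each edge representation $\mathbf{m}_e$ depends only on the local triple $(\mathbf{x}_u, \mathbf{y}_v, \mathbf{w}_e)$; in particular, $\mathbf{w}_e$ appears in no other $\mathbf{m}_{e'}$ for $e' \ne e$. Therefore the Jacobian $\pp{\mathbf{M}}{\mathbf{W}}$ is block-diagonal in the edge index, and the chain rule collapses to
\[
\pp{\mathcal{L}}{\mathbf{w}_e} \;=\; \pp{\mathcal{L}}{\mathbf{m}_e}\,\pp{\phi_m(\mathbf{x}_u, \mathbf{y}_v, \mathbf{w}_e)}{\mathbf{w}_e}, \qquad \forall \edge{u}{v}{e}\in\mathcal{E}.
\]

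First I would make the dependency on $\mathbf{w}_e$ explicit, pointing out that the partial derivative above is purely a function of $\mathbf{x}_u$, $\mathbf{y}_v$, $\mathbf{w}_e$, and the incoming gradient $\pp{\mathcal{L}}{\mathbf{m}_e}$; no information from other nodes or edges is needed. This gives the required locality: the per-edge output depends only on the incident nodes' features and a per-edge quantity that can be packaged into the edge-feature slot.

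Next I would define the message function $\phi'_w$ explicitly. Given the signature in the lemma, its third argument is a vector in $\mathbb{R}^{d_3+d_4}$, which I would interpret as the concatenation of an ``original weight'' block in $\mathbb{R}^{d_3}$ and a ``gradient'' block in $\mathbb{R}^{d_4}$. Writing a generic third argument as $[\mathbf{w}; \mathbf{g}]$, I set
\[
\phi'_w(\mathbf{x}_u, \mathbf{y}_v, [\mathbf{w}; \mathbf{g}]) \;:=\; \mathbf{g}\,\pp{\phi_m(\mathbf{x}_u, \mathbf{y}_v, \mathbf{w})}{\mathbf{w}},
\]
which outputs a row vector in $\mathbb{R}^{d_3}$, matching the required codomain. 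I would then verify that feeding $\phi'_w$ into $\gSDDMM_{\mathcal{G}, \phi'_w}$ with inputs $\mathbf{X}$, $\mathbf{Y}$, and edge feature matrix $[\mathbf{W}; \pp{\mathcal{L}}{\mathbf{M}}]$ produces, row-by-row, exactly the edgewise chain-rule expression derived above, so the two matrices agree.

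There is no real obstacle in the mathematics: this is essentially a bookkeeping argument on top of the chain rule. The only item that requires some care is the dimensional/layout convention. Under the numerator layout declared in the Nomenclatures, $\pp{\mathcal{L}}{\mathbf{m}_e}$ is a $1 \times d_4$ row vector and $\pp{\phi_m}{\mathbf{w}_e}$ is a $d_4 \times d_3$ Jacobian, so the product is $1 \times d_3$, giving rows of $\pp{\mathcal{L}}{\mathbf{W}}$ of the correct shape; I would state this explicitly to forestall confusion. A minor side remark I would add is that the construction places no smoothness requirement on $\phi_m$ beyond what is needed for $\pp{\mathcal{L}}{\mathbf{W}}$ to exist in the first place, and that the same recipe will be reused in the companion lemmas for $\pp{\mathcal{L}}{\mathbf{X}}$ and $\pp{\mathcal{L}}{\mathbf{Y}}$, which are the substantive cases since they require switching to the reverse graph $\Tilde{\mathcal{G}}$.
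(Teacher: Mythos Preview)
Your proposal is correct and takes essentially the same approach as the paper: apply the chain rule edge by edge, using the fact that $\mathbf{w}_e$ influences only $\mathbf{m}_e$, to obtain $\pp{\mathcal{L}}{\mathbf{w}_e} = \pp{\mathcal{L}}{\mathbf{m}_e}\,\nabla_{\mathbf{w}_e}\phi_m(\mathbf{x}_u,\mathbf{y}_v,\mathbf{w}_e)$. The paper's proof stops at that one-line identity, whereas you additionally spell out the explicit form of $\phi'_w$ and check the dimensions under the numerator layout; this extra bookkeeping is helpful but not a different argument.
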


\begin{proof}
By chain rule, for each $\edge{u}{v}{e} \in \mathcal{E}$ we have:
\begin{equation*}
    \dpp{\mathcal{L}}{\mathbf{w}_e} = 
    \dpp{\mathcal{L}}{\mathbf{m}_e} 
    \dpp{\mathbf{m}_e}{\mathbf{w}_e} =
    \dpp{\mathcal{L}}{\mathbf{m}_e} 
    \nabla_{\mathbf{w}_e} \phi_m\left( \mathbf{x}_u, \mathbf{y}_v, \mathbf{w}_e \right)
\end{equation*}
\end{proof}

\begin{lem}
    \label{lem:rev}
    
    Assume we are given the g-SDDMM function
    $$
    \mathbf{M} = \gSDDMM_{\mathcal{G}, \phi_m}\left(\mathbf{X}, \mathbf{Y}, \mathbf{W}\right)
    $$
    defined on graph $\mathcal{G}$ with message function $\phi_m$
    and the objective function $\mathcal{L} = \ell(\mathbf{M})$.  There exists functions $\phi'_x$ and $\phi'_y$
    $$
    \begin{gathered}
    \phi'_x:
        \mathbb{R}^{|\mathcal{V}| \times d_1},
        \mathbb{R}^{|\mathcal{V}| \times d_2},
        \mathbb{R}^{|\mathcal{E}| \times \left(d_3 + d_4\right)}
        \mapsto \mathbb{R}^{|\mathcal{V}| \times d_1} \\
    \phi'_y:
        \mathbb{R}^{|\mathcal{V}| \times d_1},
        \mathbb{R}^{|\mathcal{V}| \times d_2},
        \mathbb{R}^{|\mathcal{E}| \times \left(d_3 + d_4\right)}
        \mapsto \mathbb{R}^{|\mathcal{V}| \times d_2}
    \end{gathered}
    $$
    such that
    $$
    \begin{gathered}
    \dpp{\mathcal{L}}{\mathbf{X}} = \gSpMM_{\Tilde{\mathcal{G}}, \phi'_x, \sum}
    \left(
        \mathbf{X}, \mathbf{Y}, \left[\mathbf{W}; \dpp{\mathcal{L}}{\mathbf{M}}\right]
    \right) \\
    \dpp{\mathcal{L}}{\mathbf{Y}} = \gSpMM_{\mathcal{G}, \phi'_y, \sum}
    \left(
        \mathbf{X}, \mathbf{Y}, \left[\mathbf{W}; \dpp{\mathcal{L}}{\mathbf{M}}\right]
    \right)
    \end{gathered}
    $$
    where $\Tilde{\mathcal{G}}$
    represents the reverse graph of $\mathcal{G}$, and
    $\sum$ denotes the summation as a reduce function of the g-SpMMs.

\end{lem}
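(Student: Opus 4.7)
The plan is a straightforward application of the multivariate chain rule, followed by identifying the resulting summations with the g-SpMM template on either $\mathcal{G}$ or its reverse $\tilde{\mathcal{G}}$.

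First, since $\mathcal{L} = \ell(\mathbf{M})$ and each $\mathbf{m}_e = \phi_m(\mathbf{x}_u, \mathbf{y}_v, \mathbf{w}_e)$ depends on a node row $\mathbf{x}_u$ only when $u$ is the source of edge $e$, the chain rule gives
$$
\frac{\partial \mathcal{L}}{\partial \mathbf{x}_u} \;=\; \sum_{\edge{u}{v}{e} \in \mathcal{E}} \frac{\partial \mathcal{L}}{\partial \mathbf{m}_e} \, \nabla_{\mathbf{x}_u}\phi_m\!\left(\mathbf{x}_u, \mathbf{y}_v, \mathbf{w}_e\right),
$$
and analogously $\partial\mathcal{L}/\partial\mathbf{y}_v$ is a sum of Jacobian-vector products over edges with $v$ as destination. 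This is the only calculus step in the proof; everything afterwards is combinatorial bookkeeping.

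Second, I would identify each summation as a g-SpMM. The sum for $\partial\mathcal{L}/\partial\mathbf{y}_v$ iterates over inbound edges of $v$ in $\mathcal{G}$, which is literally the aggregation pattern of $\gSpMM_{\mathcal{G}}$. The sum for $\partial\mathcal{L}/\partial\mathbf{x}_u$ iterates over outbound edges of $u$ in $\mathcal{G}$, which by the definition of $\tilde{\mathcal{G}}$ are precisely the inbound edges of $u$ in $\tilde{\mathcal{G}}$; hence it is a $\gSpMM_{\tilde{\mathcal{G}}}$. In both cases the reducer is $\sum$, as required. To make every ingredient of the summand available to the per-edge message function, I would append the upstream cotangent $\partial\mathcal{L}/\partial\mathbf{M}$ to the original edge feature $\mathbf{W}$, which is precisely the $[\mathbf{W}; \partial\mathcal{L}/\partial\mathbf{M}]$ concatenation appearing in the lemma statement; then I would define $\phi'_x$ and $\phi'_y$ to be exactly the per-edge JVP expressions on the right-hand sides above. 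The type signatures then work out: $\phi'_x$ produces a $d_1$-dimensional contribution per edge (matching the shape of a row of $\mathbf{X}$), and $\phi'_y$ produces a $d_2$-dimensional contribution per edge.

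The main subtlety, and the chief obstacle for a careful reader, is the reverse-graph bookkeeping for $\phi'_x$. The g-SpMM primitive feeds its message function the source row of $\mathbf{X}$ and the destination row of $\mathbf{Y}$; on $\tilde{\mathcal{G}}$, for a reverse edge $\edge{v}{u}{e}\in\tilde{\mathcal{E}}$, these become $\mathbf{x}_v$ and $\mathbf{y}_u$, whereas the integrand requires $\mathbf{x}_u$ and $\mathbf{y}_v$. This is not a real obstruction, but a relabeling that must be handled explicitly --- either by swapping the two tensor arguments of the g-SpMM call on $\tilde{\mathcal{G}}$ or, equivalently, by defining $\phi'_x$ so that it reads $\mathbf{X}$ via the destination index and $\mathbf{Y}$ via the source index. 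Once this detail is pinned down, the right-hand side of the g-SpMM definition literally reproduces the chain-rule sum at every node, and both identities in the lemma follow immediately.
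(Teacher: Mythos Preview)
Your proposal is correct and follows essentially the same route as the paper: apply the chain rule to obtain $\pp{\mathcal{L}}{\mathbf{x}_u}$ and $\pp{\mathcal{L}}{\mathbf{y}_v}$ as sums of Jacobian--vector products over outgoing (resp.\ incoming) edges, then recognize these as g-SpMM aggregations on $\tilde{\mathcal{G}}$ (resp.\ $\mathcal{G}$) with the summation reducer and with the edge features augmented by $\pp{\mathcal{L}}{\mathbf{M}}$. Your explicit discussion of the source/destination relabeling on $\tilde{\mathcal{G}}$ is in fact more careful than the paper's proof, which simply underbraces the summand as ``message function on the reverse graph'' without addressing that bookkeeping.
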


\begin{proof}
By chain rule, for each $u, v \in \mathcal{V}$ we have:
\begin{align*}
    \dpp{\mathcal{L}}{\mathbf{x}_u} &= \sum_{e', v': \edge{u}{v'}{e'}\in \mathcal{E}}
    \dpp{\mathcal{L}}{\mathbf{m}_{e'}}
    \dpp{\mathbf{m}_{e'}}{\mathbf{x}_u} \\ &= 
    \sum_{e', v': \edge{u}{v'}{e'}\in \mathcal{E}} \dpp{\mathcal{L}}{\mathbf{m}_{e'}}
    \nabla_{\mathbf{x}_u} \phi_m\left( \mathbf{x}_u, \mathbf{y}_{v'}, \mathbf{w}_{e'} \right) \\ &= 
    \underbrace{\sum_{e', v': (v',e',u)\in \Tilde{\mathcal{E}}}}_{\text{reduce function}} \underbrace{\dpp{\mathcal{L}}{\mathbf{m}_{e'}}
    \nabla_{\mathbf{x}_u} \phi_m\left( \mathbf{x}_u, \mathbf{y}_{v'}, \mathbf{w}_{e'} \right)}_{\text{message function on the reverse graph}} \\
    \dpp{\mathcal{L}}{\mathbf{y}_v} &= \sum_{u', e': \edge{u'}{v}{e'}\in \mathcal{E}}
    \dpp{\mathcal{L}}{\mathbf{m}_{e'}}
    \dpp{\mathbf{m}_{e'}}{\mathbf{y}_v} \\ &= 
    \underbrace{\sum_{u', e': \edge{u'}{v}{e'}\in \mathcal{E}}}_{\text{reduce function}} \underbrace{\dpp{\mathcal{L}}{\mathbf{m}_{e'}}
    \nabla_{\mathbf{y}_v} \phi_m\left( \mathbf{x}_{u'}, \mathbf{y}_v, \mathbf{w}_{e'} \right)}_{\text{message function}} \\
\end{align*}
\end{proof}

\begin{lem}
    Assume we are given the g-SpMM function
    $$
    \mathbf{Z} = \gSpMM_{\mathcal{G}, \phi_z, \rho}\left(\mathbf{X}, \mathbf{Y}, \mathbf{W}\right)
    $$
    defined on graph $\mathcal{G}$ with message function $\phi_z$ and reduce function $\rho$
    and the objective function $\mathcal{L} = \ell(\mathbf{Z})$.  There exists a function $\phi'_w$
    $$
    \phi'_w :
        \mathbb{R}^{|\mathcal{V}| \times d_1},
        \mathbb{R}^{|\mathcal{V}| \times \left(d_2 + d_4\right)},
        \mathbb{R}^{|\mathcal{E}| \times d_3}
     \mapsto \mathbb{R}^{|\mathcal{E}| \times d_3}
    $$
    such that
    $$
    \dpp{\mathcal{L}}{\mathbf{W}} = \gSDDMM_{\mathcal{G}, \phi'_w}
    \left(
        \mathbf{X}, \left[\mathbf{Y}; \dpp{\mathcal{L}}{\mathbf{Z}}\right], \mathbf{W}
    \right)
    $$
\end{lem}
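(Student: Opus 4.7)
The plan is to apply the chain rule edge by edge, leveraging the observation that $\mathbf{w}_e$ enters the g-SpMM computation only through the single intermediate message on edge $e$. First I would fix an edge $\edge{u}{v}{e} \in \mathcal{E}$ and note that, by the definition of g-SpMM, $\mathbf{w}_e$ contributes to $\mathbf{Z}$ only via $\mathbf{m}_e = \phi_z(\mathbf{x}_u, \mathbf{y}_v, \mathbf{w}_e)$, which is aggregated by $\rho$ into $\mathbf{z}_v$ and into no other $\mathbf{z}_{v'}$. A routine chain rule then yields
$$\dpp{\mathcal{L}}{\mathbf{w}_e} \;=\; \dpp{\mathcal{L}}{\mathbf{z}_v}\; \dpp{\mathbf{z}_v}{\mathbf{m}_e}\; \nabla_{\mathbf{w}_e}\phi_z(\mathbf{x}_u, \mathbf{y}_v, \mathbf{w}_e).$$

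Next I would verify that every factor on the right-hand side depends only on edge-local data: the source feature $\mathbf{x}_u$, the destination feature $\mathbf{y}_v$, the edge feature $\mathbf{w}_e$, and the per-destination upstream gradient $\dpp{\mathcal{L}}{\mathbf{z}_v}$. The term $\nabla_{\mathbf{w}_e}\phi_z$ is manifestly so. For the standard GNN reducers (sum, mean, elementwise max), the Jacobian $\dpp{\mathbf{z}_v}{\mathbf{m}_e}$ admits a closed form in terms of $\mathbf{m}_e$ and $\mathbf{z}_v$ alone, and $\mathbf{m}_e$ can be recomputed from $(\mathbf{x}_u, \mathbf{y}_v, \mathbf{w}_e)$. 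I would then define
$$\phi'_w\bigl(\mathbf{x}_u,\; [\mathbf{y}_v;\,\mathbf{g}_v],\; \mathbf{w}_e\bigr) \;:=\; \mathbf{g}_v\; \dpp{\mathbf{z}_v}{\mathbf{m}_e}\; \nabla_{\mathbf{w}_e}\phi_z(\mathbf{x}_u, \mathbf{y}_v, \mathbf{w}_e),$$
where $\mathbf{y}_v$ occupies the first $d_2$ columns of the second argument and $\mathbf{g}_v$ the last $d_4$. Substituting $\mathbf{g}_v = \dpp{\mathcal{L}}{\mathbf{z}_v}$ recovers the chain-rule formula edge-wise, so stacking the result over all $e \in \mathcal{E}$ is exactly $\gSDDMM_{\mathcal{G}, \phi'_w}\bigl(\mathbf{X}, [\mathbf{Y};\dpp{\mathcal{L}}{\mathbf{Z}}], \mathbf{W}\bigr)$, which is the identity claimed.

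The hard part will be justifying the step where $\dpp{\mathbf{z}_v}{\mathbf{m}_e}$ is treated as a local function of $(\mathbf{m}_e, \mathbf{z}_v)$. For the sum reducer this is vacuous since the Jacobian is the identity, but for a generic $\rho$ — for example an LSTM over the multiset of incoming messages — the Jacobian could in principle depend on every $\mathbf{m}_{e'}$ incident to $v$, which would violate g-SDDMM locality. The honest scope of the lemma is therefore the class of reducers whose per-message Jacobian is expressible in $(\mathbf{m}_e, \mathbf{z}_v)$; this mirrors the commutative-and-associative restriction already invoked for edge-parallel g-SpMM in the paper and covers every built-in aggregator that DGL exposes. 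Under that scope, the construction above is complete.
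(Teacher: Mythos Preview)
Your argument is essentially the paper's: fix an edge, observe that $\mathbf{w}_e$ reaches $\mathcal{L}$ only through $\mathbf{m}_e$ and then $\mathbf{z}_v$, and apply the chain rule to obtain $\dpp{\mathcal{L}}{\mathbf{w}_e}=\dpp{\mathcal{L}}{\mathbf{z}_v}\,\nabla_{\mathbf{m}_e}\rho\,\nabla_{\mathbf{w}_e}\phi_z$. Your added discussion of when $\nabla_{\mathbf{m}_e}\rho$ is edge-local is more scrupulous than the paper, which simply writes the chain-rule expression and leaves that locality implicit.
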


\begin{proof}
Let $\mathbf{m}_{e} = \phi_z\left( \mathbf{x}_u, \mathbf{y}_v, \mathbf{w}_{e} \right)$ for all $\edge{u}{v}{e} \in \mathcal{E}$.  Since $\mathbf{m}_{e}$ only participates in computation of $\mathbf{z}_v$, we have
\begin{equation*}
    \dpp{\mathcal{L}}{\mathbf{w}_{e}} =
    \dpp{\mathcal{L}}{\mathbf{z}_{v}}
    \dpp{\mathbf{z}_{v}}{\mathbf{m}_{e}}
    \dpp{\mathbf{m}_{e}}{\mathbf{w}_{e}} =
    \dpp{\mathcal{L}}{\mathbf{z}_{v}}
    \nabla_{\mathbf{m}_{e}}\rho\left(\left\lbrace
    \mathbf{m}_{e'} : (u', e', v) \in \mathcal{E}
    \right\rbrace\right)
    \nabla_{\mathbf{w}_{e}} \phi\left( \mathbf{x}_u, \mathbf{y}_v, \mathbf{w}_{e} \right)
\end{equation*}
\end{proof}

\begin{lem}
    \label{lem:complex_dspmm_dx}
    Assume we are given the g-SpMM function
    $$
    \mathbf{Z} = \gSpMM_{\mathcal{G}, \phi_z, \rho}\left(\mathbf{X}, \mathbf{Y}, \mathbf{W}\right)
    $$
    defined on graph $\mathcal{G}$ with message function $\phi_z$ and reduce function $\rho$
    and the objective function $\mathcal{L} = \ell(\mathbf{Z})$.  There exists functions $\phi'_x$ and $\phi'_y$
    $$
    \begin{gathered}
    \phi'_x:
        \mathbb{R}^{|\mathcal{V}| \times d_1},
        \mathbb{R}^{|\mathcal{V}| \times \left(d_2 + d_4\right)},
        \mathbb{R}^{|\mathcal{E}| \times d_3}
        \mapsto \mathbb{R}^{|\mathcal{V}| \times d_1} \\
    \phi'_y:
        \mathbb{R}^{|\mathcal{V}| \times d_1},
        \mathbb{R}^{|\mathcal{V}| \times \left(d_2 + d_4\right)},
        \mathbb{R}^{|\mathcal{E}| \times d_3}
        \mapsto \mathbb{R}^{|\mathcal{V}| \times d_2}
    \end{gathered}
    $$
    and set functions $\rho'_x$, $\rho'_y$, such that
    $$
    \begin{gathered}
    \dpp{\mathcal{L}}{\mathbf{X}} = \gSpMM_{\Tilde{\mathcal{G}}, \phi'_x, \rho'_x}
    \left(
        \mathbf{X}, \left[\mathbf{Y}; \dpp{\mathcal{L}}{\mathbf{Z}}\right], \mathbf{W}
    \right) \\
    \dpp{\mathcal{L}}{\mathbf{Y}} = \gSpMM_{\mathcal{G}, \phi'_y, \rho'_y}
    \left(
        \mathbf{X}, \left[\mathbf{Y}; \dpp{\mathcal{L}}{\mathbf{Z}}\right], \mathbf{W}
    \right)
    \end{gathered}
    $$
    where $\Tilde{\mathcal{G}}$
    represents the reverse graph of $\mathcal{G}$.
\end{lem}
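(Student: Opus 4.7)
The plan is to generalize the chain-rule argument of Lemma~\ref{lem:rev} by threading derivatives through the reduce function $\rho$ as well as the message function $\phi_z$. First I would introduce the intermediate quantities $\mathbf{m}_{e} = \phi_z(\mathbf{x}_u, \mathbf{y}_v, \mathbf{w}_{e})$ for each $\edge{u}{v}{e} \in \mathcal{E}$, so that $\mathbf{z}_v = \rho(\{\mathbf{m}_{e} : \edge{u}{v}{e} \in \mathcal{E}\})$. Observe that $\mathbf{x}_u$ enters $\mathbf{z}_{v'}$ only through messages on edges leaving $u$, while $\mathbf{y}_v$ enters $\mathbf{z}_v$ only through messages on edges entering $v$. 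Applying the chain rule then yields
$$
\dpp{\mathcal{L}}{\mathbf{x}_u} = \sum_{v',e' : \edge{u}{v'}{e'} \in \mathcal{E}} \dpp{\mathcal{L}}{\mathbf{z}_{v'}} \, \nabla_{\mathbf{m}_{e'}}\rho \, \nabla_{\mathbf{x}_u}\phi_z(\mathbf{x}_u, \mathbf{y}_{v'}, \mathbf{w}_{e'}),
$$
$$
\dpp{\mathcal{L}}{\mathbf{y}_v} = \sum_{u',e' : \edge{u'}{v}{e'} \in \mathcal{E}} \dpp{\mathcal{L}}{\mathbf{z}_{v}} \, \nabla_{\mathbf{m}_{e'}}\rho \, \nabla_{\mathbf{y}_v}\phi_z(\mathbf{x}_{u'}, \mathbf{y}_v, \mathbf{w}_{e'}).
$$

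Next I would re-index each sum so that it matches the g-SpMM template of aggregating over edges \emph{into} the vertex whose derivative is being taken. For $\dpp{\mathcal{L}}{\mathbf{y}_v}$ the incoming edges in $\mathcal{G}$ are already correct, so I set $\rho'_y = \sum$ and take $\phi'_y$ to be the per-edge product appearing in the formula above, with $\dpp{\mathcal{L}}{\mathbf{z}_v}$ delivered through the concatenated second input $[\mathbf{Y};\dpp{\mathcal{L}}{\mathbf{Z}}]$. For $\dpp{\mathcal{L}}{\mathbf{x}_u}$ the outgoing edges of $u$ in $\mathcal{G}$ coincide with the incoming edges of $u$ in $\Tilde{\mathcal{G}}$, so the sum is naturally a g-SpMM on the reverse graph with $\rho'_x = \sum$ and an analogously defined $\phi'_x$. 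Under the convention that the ``source'' endpoint in $\Tilde{\mathcal{G}}$ corresponds to $v'$, the row of $[\mathbf{Y};\dpp{\mathcal{L}}{\mathbf{Z}}]$ indexed by $v'$ supplies both $\mathbf{y}_{v'}$ and $\dpp{\mathcal{L}}{\mathbf{z}_{v'}}$ that the message function needs.

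The main obstacle is the factor $\nabla_{\mathbf{m}_{e'}}\rho$, which in full generality depends on the \emph{whole} multiset of messages aggregated at $v'$, not just on the nominal inputs $(\mathbf{x}_u, [\mathbf{y}_{v'};\dpp{\mathcal{L}}{\mathbf{z}_{v'}}], \mathbf{w}_{e'})$ that $\phi'_x$ is allowed to see. The resolution I plan to use is that $\phi'_x$ can locally recompute $\mathbf{m}_{e'} = \phi_z(\mathbf{x}_u, \mathbf{y}_{v'}, \mathbf{w}_{e'})$ from its arguments, and that for the reducers that appear in practice this Jacobian is expressible from $\mathbf{m}_{e'}$ together with $\mathbf{z}_{v'}$ (or some scalar aggregate such as degree): summation gives the identity, mean gives a scaling by $|\mathcal{N}(v')|^{-1}$, and max/min/softmax-weighted reducers are determined by comparisons or ratios against $\mathbf{z}_{v'}$. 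Hence one either augments the concatenation with $\mathbf{Z}$ itself or restricts $\rho$ to this locally-recoverable class; in either case the desired $\phi'_x$, $\phi'_y$, $\rho'_x$, $\rho'_y$ can be written down explicitly, completing the identification of the gradient with a g-SpMM on $\Tilde{\mathcal{G}}$ and on $\mathcal{G}$ respectively.
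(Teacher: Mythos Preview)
Your chain-rule derivation is exactly the paper's: introduce $\mathbf{m}_e = \phi_z(\mathbf{x}_u,\mathbf{y}_v,\mathbf{w}_e)$, differentiate through $\rho$ and $\phi_z$, and re-index the $\mathbf{X}$-sum over the reverse graph. The divergence is in how the resulting expression is packaged as a g-SpMM.

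You commit to $\rho'_x=\rho'_y=\sum$, which forces each summand to be a bona fide per-edge message and creates the obstacle you describe with $\nabla_{\mathbf{m}_{e'}}\rho$. The paper does not do this. The lemma allows $\rho'_x,\rho'_y$ to be \emph{arbitrary} set functions, and the paper exploits that freedom: it simply observes that $\pp{\mathcal{L}}{\mathbf{y}_v}$ is a function only of $v$'s own features, its predecessors' features, and its inbound-edge features, hence is computable by \emph{some} g-SpMM on $\mathcal{G}$. Concretely, one may let $\phi'_y$ just pass through the raw triple $(\mathbf{x}_{u'},[\mathbf{y}_v;\pp{\mathcal{L}}{\mathbf{z}_v}],\mathbf{w}_{e'})$ and let $\rho'_y$ recompute every $\mathbf{m}_{e''}$ from the full multiset it receives, evaluate each $\nabla_{\mathbf{m}_{e'}}\rho$, and sum. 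This dissolves your obstacle for the $\mathbf{Y}$ case without augmenting the inputs with $\mathbf{Z}$ or restricting $\rho$; your proposed workarounds would alter the statement and are unnecessary there.

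For $\pp{\mathcal{L}}{\mathbf{X}}$ the paper makes the symmetric locality claim on $\Tilde{\mathcal{G}}$ and stops. It does not explicitly confront the point you raise, namely that $\nabla_{\mathbf{m}_{e'}}\rho$ at a successor $v'$ depends on messages from \emph{all} predecessors of $v'$, not only $u$. On that count you are being more careful than the paper's own proof, which simply asserts the conclusion.
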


\begin{proof}
    We first show the correctness for $\pp{\mathcal{L}}{\mathbf{Y}}$.
    
    Let $\mathbf{m}_{e} = \phi_z\left( \mathbf{x}_u, \mathbf{y}_v, \mathbf{w}_{e} \right)$ for all $\edge{u}{v}{e} \in \mathcal{G}$.
    Since $\mathbf{y}_v$ only takes part in the computation of $\mathbf{z}_v$, we have
    \begin{align*}
        \dpp{\mathcal{L}}{\mathbf{y}_v} &=
        \dpp{\mathcal{L}}{\mathbf{z}_v}
        \dpp{\mathbf{z}_v}{\mathbf{y}_v} \\ &=
        \dpp{\mathcal{L}}{\mathbf{z}_v}
        \sum_{u', e': \edge{u'}{v}{e'} \in \mathcal{E}}
          \dpp{\mathbf{z}_v}{\mathbf{m}_{e'}}
          \dpp{\mathbf{m}_{e'}}{\mathbf{y}_v} \\ &=
        \sum_{u',e':\edge{u'}{v}{e'} \in \mathcal{E}}
          \dpp{\mathcal{L}}{\mathbf{z}_v}
          \nabla_{\mathbf{m}_{e'}}\rho\left(\left\lbrace
            \mathbf{m}_{e''} : \edge{u''}{v}{e''} \in \mathcal{E}
            \right\rbrace\right)
          \nabla_{\mathbf{y}_{v}} \phi_z\left( \mathbf{x}_{u'},
            \mathbf{y}_v, \mathbf{w}_{e'} \right)
    \end{align*}
    
    To derive $\pp{\mathcal{L}}{\mathbf{X}}$, we need to sum over all successors of $u$:
    
    \begin{align*}
        \dpp{\mathcal{L}}{\mathbf{x}_u} &=
        \sum_{e',v' : \edge{u}{v'}{e'} \in \mathcal{E}}
        \dpp{\mathcal{L}}{\mathbf{z}_{v'}}
        \dpp{\mathbf{z}_{v'}}{\mathbf{x}_u} \\ &=
        \sum_{e',v' : \edge{u}{v'}{e'} \in \mathcal{E}}
        \dpp{\mathcal{L}}{\mathbf{z}_{v'}}
        \dpp{\mathbf{z}_{v'}}{\mathbf{m}_{e'}}
        \dpp{\mathbf{m}_{e'}}{\mathbf{x}_u} \\ &=
        \sum_{e',v' : \edge{u}{v'}{e'} \in \mathcal{E}}
        \dpp{\mathcal{L}}{\mathbf{z}_{v'}}
        \nabla_{\mathbf{m}_{e'}}\rho\left(\left\lbrace
          \mathbf{m}_{e''} : (u'',e'',v') \in \mathcal{E}
          \right\rbrace\right)
        \nabla_{\mathbf{x}_{u}} \phi_z\left(
          \mathbf{x}_u, \mathbf{y}_{v'}, \mathbf{w}_{e'} \right) \\ &=
        \sum_{e',v':(v',e',u) \in \Tilde{\mathcal{E}}}
          \dpp{\mathcal{L}}{\mathbf{z}_{v'}}
          \nabla_{\mathbf{m}_{e'}}\rho\left(\left\lbrace
            \mathbf{m}_{e''} : (v',e'',u'') \in \Tilde{\mathcal{E}}
            \right\rbrace\right)
          \nabla_{\mathbf{x}_{u}} \phi_z\left( \mathbf{x}_{u},
            \mathbf{y}_{v'}, \mathbf{w}_{e'} \right)
    \end{align*}
    
    We can see that for each node $v$, $\pp{\mathcal{L}}{\mathbf{y}_v}$ is only a function of its own features, the predecessors' features, and the features of inbound edges.  Therefore, $\pp{\mathcal{L}}{\mathbf{Y}}$ can be computed via a g-SpMM defined on graph $\mathcal{G}$.
    
    Similarly, for each node $u$, $\pp{\mathcal{L}}{\mathbf{x}_u}$ is only a function of its own features, the successors' features, and the features of outbound edges.  It can be computed via a g-SpMM defined on the reverse graph $\Tilde{\mathcal{G}}$
\end{proof}

\begin{proof}[Proof of Theorem~\ref{thm:mm}]
    The theorem can be proved trivially from the lemmas above, as the addition of two g-SDDMM functions on the same graph is still a g-SDDMM function on the same graph.  The same holds for g-SpMM functions as well.
\end{proof}

\section{Dataset Statistics}

\begin{table}[h!]
    \hspace{0.9cm}
    \caption{Statistics of all datasets used in Sec.~\ref{sec:eval}}
    \centering
    \begin{tabular}{lrrrr}\toprule
         Dataset & \# Nodes & \# Edges & \begin{tabular}{@{}c@{}}\# Node \\ Features\end{tabular} & \begin{tabular}{@{}c@{}}\# Edge \\ Features\end{tabular}  \\\midrule
         \multicolumn{5}{c}{Node Classification} \\\midrule
         REDDIT & 232,965 & 11,606,919 & 602 & 0 \\
         OGBN-ARXIV & 169,343 &  1,166,243 & 128 & 0 \\
         OGBN-PROTEIN & 132,534 &  39,561,252 & 0 & 8 \\
         OGBN-PRODUCT & 2,449,029 & 61,859,140 & 100 & 0 \\\midrule
         \multicolumn{5}{c}{Link Prediction} \\\midrule
         ML-100K & \begin{tabular}{@{}r@{}}943 users \\ 1,682 movies\end{tabular} & 100,000 &  \begin{tabular}{@{}r@{}}943 (user) \\ 1,682 (movie)\end{tabular} & 0\\
         ML-1M   & \begin{tabular}{@{}r@{}}6,040 users \\ 3,706 movies\end{tabular} & 1,000,209 & \begin{tabular}{@{}r@{}}6,040 (user) \\ 3,706 (movie)\end{tabular} & 0\\
         ML-10M  & \begin{tabular}{@{}r@{}}69,878 users \\ 10,677 movies\end{tabular} & 10,000,054 & \begin{tabular}{@{}r@{}}69,878 (user) \\ 10,677 (movie)\end{tabular} & 0\\
         OGBL-CITATION &  2,927,963 &  30,561,187 & 128 & 0\\
         OGBL-PPA      & 576,289 & 30,326,273 & 58 & 0\\
         \bottomrule
    \end{tabular}
\end{table}

For ML-100K, ML-1M and ML-10M, we use separate one-hot encoding for user and movie nodes.

\section{Experiment configurations}
\subsection{Full graph training}
Here we list the hyper-parameter configurations used in comparing training speed between DGL and PyG (Sec.~\ref{ss:eval-speed}).

\paragraph{Node classification.}
\begin{itemize}[leftmargin=0.5cm]
    \item The GraphSAGE model on \textsc{Reddit} has two layers, each with 16 hidden size and the aggregator is summation.
    \item The GAT model on \textsc{Reddit} has three layers, each with 16 hidden size and one attention head.
    \item The GraphSAGE model on \textsc{ogbn-arxiv} has three layers, each with 256 hidden size and the aggregator is summation.
    \item The GAT model on \textsc{ogbn-arxiv} has three layers, each with 16 hidden size and four attention heads.
    \item The R-GCN model on \textsc{ogbn-protein} has three layers with 32 hidden size. The graph has 8 edge features in the range of $[0, 1]$, which can be viewed as connectivity strength for 8 relations. The RGCN model takes the following formulation:
    $$H^{(l+1)}=\sigma\left(\sum_{r=1}^{8}D_r^{-1}A_rH^{(l)}W_r^{(l)}+H^{(l)}W_0^{(l)}\right),$$
    where $H^{(l)}$ is the node representations after the $l$-th RGCN layer, $\sigma$ is the ReLU activation function, $D_r$ is the degree matrix for relation $r$, $A_r$ is the adjacency matrix for relation $r$, and $W_r^{(l)}, W_0^{l}$ are learnable weights. $H^{(0)}$ is the initial node features. Since the graph does not have raw node features, we use a scalar $1$ for each node. The original $A_r$'s are binary adjacency matrices and they become weighted in the case of \textsc{ogbn-protein}.
\end{itemize}

\paragraph{Link prediction.} The GCMC models on \textsc{ML-100K}, \textsc{ML-1M} and \textsc{ML-10M} all adopt an encoder-decoder architecture as proposed in the original paper. The input is a one-hot encoding of the movie and item nodes. The encoder has one graph convolution layer which projects the input encoding to a layer of 500 units with summation as the message aggregator, and one fully-connected layer which outputs a layer of 75 units. All the models use a bi-linear model as the decoder and the number of basis is set to two.

\subsection{Mini-batch training}
\textbf{Node classification.}
\begin{itemize}[leftmargin=0.5cm]
    \item For training with neighbor sampling on the \textsc{Reddit} graph, we use a batch size of 1024 and the sampling fanouts are 25 and 10 from the first to the last layer. Both the GraphSAGE and the GAT models used have three layers and a hidden size of 16. The GAT model has 8 attention heads.
    \item For training with neighbor sampling on the \textsc{ogbn-product} graph, we use a batch size of 1024 and sampling fanouts of 15, 10, 5 for GraphSAGE, and a batch size of 128 and sampling fanouts of 10, 10, 10 for GAT. Both the GraphSAGE and the GAT models have three layers with a hidden size of 256. The GAT model has 8 attention heads.
    \item For training with cluster sampling on the \textsc{ogbn-product} graph, we first partition the graph into 15000 clusters. The training batch size is 32, meaning each mini-batch contains the induced subgraph of nodes from 32 clusters. Both the GraphSAGE and the GAT models have three layers with a hidden size of 256. The GAT model has 8 attention heads.
\end{itemize}

\textbf{Link prediction.} For all the experiments, we first partition the input graph into 15000 clusters. The training batch size is 256, meaning each mini-batch contains the induced subgraph of nodes from 256 clusters. All the models adopt an encoder-decoder architecture. The encoder models (i.e., GCN or GAT) all have three layers with a hidden size of 256. The GAT models all have one attention head. The decoder model predicts edges by a dot product between the representations of the incident nodes. By default, only one negative sample is generated per positive sample by corrupting one end-point of the edge.

\section{Kernel sensitivity to graph and model configuration}

\begin{figure}
    \centering
    \includegraphics[width=0.6\linewidth]{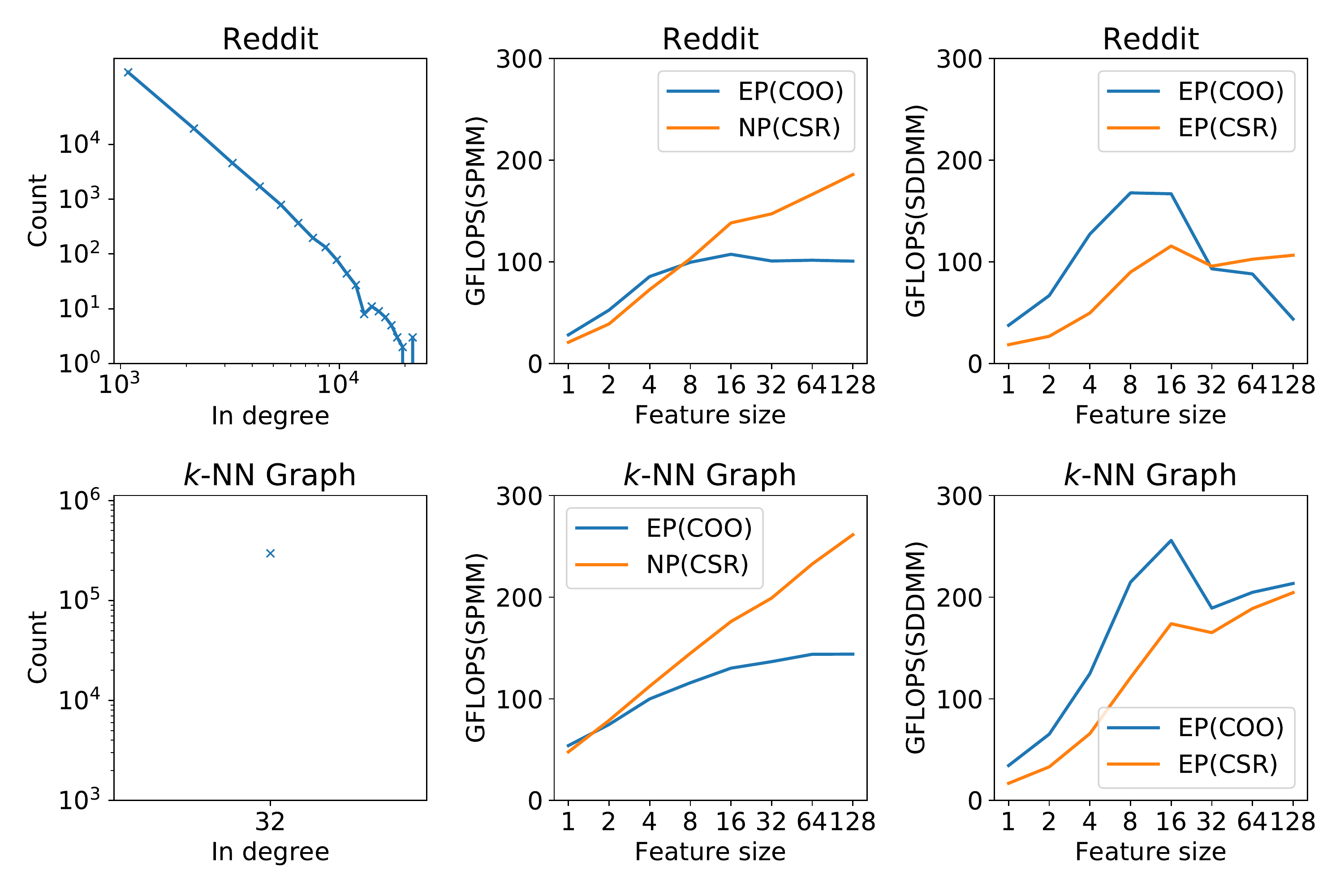}
    \caption{Throughput of SpMM and SDDMM with different parallel strategies, sparse formats on two input graphs. We fix the number of heads to 8 and vary the feature size of each head. The figures in the upper row are for the \textsc{Reddit} dataset while the ones in the lower row are for the $k$-NN graph. NP stands for node parallel while EP stands for edge parallel.}
    \label{fig:spmm-sddmm-format}
\end{figure}

We studied how graph structures and model configurations influence the speed of g-SpMM and g-SDDMM kernels and thus the choice of parallel strategies (i.e., node or edge parallel). We benchmarked a g-SpMM and a g-SDDMM kernel on two input graphs, the \textsc{Reddit} graph from~\citep{graphsage} and a nearest neighbor graph generated by~\citep{qi2017pointnet++}, with varying feature sizes. The two graphs have very different degree frequencies, with \textsc{Reddit} (Figure~\ref{fig:spmm-sddmm-format} upper row) having power-law degree distribution while the $k$-NN graph (Figure~\ref{fig:spmm-sddmm-format} lower row) having a constant indegree equal to 32. The g-SpMM kernel is extracted from the Graph Attention Network (GAT)~\citep{gat} model; it multiplies a neighbor node's representation $\mathbf{x}_u$ with the attention weight $\alpha_e$ on the edge during message aggregation. With multiple attention heads (8 in our experiment), the formulation of each head $h$ is as follows:
\[
\mathbf{z}_{v,h}=\sum_{\edge{u}{v}{e}\in\mathcal{E}}\alpha_{e,h}\mathbf{x}_{u,h}
\]

The g-SDDMM kernel computes the attention weight by a dot-product of the source and destination nodes:

\[
\alpha_{e,h}=\langle \mathbf{x}_{u,h}, \mathbf{x}_{v,h} \rangle, \forall \edge{u}{v}{e}\in\mathcal{E}
\]

The kernel throughput (in GFLOPS) measured on a NVIDIA V100 GPU is shown in Figure~\ref{fig:spmm-sddmm-format}. It shows that the optimal choice of edge parallel or node parallel relies on graph structure, feature size and the sparse format in use. For SpMM, edge parallel is slightly better than node parallel for small feature size but eventually becomes worse when the feature size scales up due to the overhead from atomic aggregation. For SDDMM, edge parallel on COO outperforms CSR by a large margin up to feature size equal to 16 but again becomes worse afterwards because of the better memory locality of CSR.


\section{Guide for porting GNN models across deep learning frameworks}
DGL provides a framework-neutral design and allows users to develop GNN models
on different deep learning frameworks, such as PyTorch, TensorFlow and MXNet.
Assume a user finds an existing GNN model in one framework and wishes to port
it to another one that s/he is familiar with. Such porting needs to
address three categories of differences in the deep learning frameworks.

\begin{figure}[pt]
	\centering
\begin{subfigure}[t]{0.48\textwidth}
\centering
\begin{lstlisting}
from torch import nn

class SAGEConv(nn.Module):
  def __init__(self, in_feat, out_feat,
               feat_drop=0., activation=None):
    pass
  def forward(self, tensor):
    pass
\end{lstlisting}
\caption{PyTorch}
\end{subfigure}
\begin{subfigure}[t]{0.48\textwidth}
\centering
\begin{lstlisting}
from tensorflow.keras import layers

class SAGEConv(layers.Layer):
  def __init__(self, in_feats, out_feats,
               feat_drop=0., activation=None):
    pass
  def call(self, tensor):
    pass
\end{lstlisting}
\caption{TensorFlow}
\end{subfigure}
\caption{Module classes inherit from different classes in PyTorch and TensorFlow.}
\label{fig:inherit}
\end{figure}

\begin{figure}[pt]
	\centering
\begin{subfigure}[t]{0.48\textwidth}
\centering
\begin{lstlisting}
from torch import nn
fc = nn.Linear(in_feat, out_feat)
gain = nn.init.calculate_gain('relu')
nn.init.xavier_uniform_(fc, gain=gain)
\end{lstlisting}
\caption{PyTorch}
\end{subfigure}
\begin{subfigure}[t]{0.48\textwidth}
\centering
\begin{lstlisting}
import tensorflow as tf
from tensorflow.keras import layers
xinit = tf.keras.initializers.VarianceScaling(
            scale=np.sqrt(2), mode="fan_avg",
            distribution="untruncated_normal")
fc = layers.Dense(out_feats, kernel_initializer=xinit)
\end{lstlisting}
\caption{TensorFlow}
\end{subfigure}
\caption{Create a fully connected sub-module in PyTorch and TensorFlow.}
\label{fig:submodule}
\end{figure}

Porting models usually involves in three steps.
\begin{itemize}[leftmargin=0.5cm]
    \item Step 1 is to change
model class inheritance. For example, when porting from Pytorch to TensorFlow,
the model class should inherit from \code{tensorflow.keras.layer.Layer} instead
of \code{torch.nn.Module}. Figure \ref{fig:inherit} shows such an example.
    \item Step 2 is to change the sub-modules used inside the model.
These sub-modules are usually defined in the initialization method of the model class.
Different frameworks usually define similar sub-modules but with different sub-module names
and different arguments. They also initialize the parameters in the sub-modules
differently. For example, the fully connected layer in Pytorch is defined in \code{nn.Linear} but it is defined in \code{layers.Dense} in TensorFlow.
The arguments of the sub-modules are also different. We always need to define
the input and output dimensions for the Pytorch sub-modules but only need to
define the output dimensions for the TensorFlow sub-modules. In addition,
PyTorch initializes parameters after the definition of \code{nn} modules,
while TensorFlow specifies initialization method together with layer definition.
Figure \ref{fig:submodule} shows an example of such differences.
    \item Step 3 is to replace the framework-specific operators. Similar to sub-modules,
different frameworks define similar tensor operators but with different names
and different input arguments. For example, matrix multiplication is defined in
\code{tensorflow.matmul} in TensorFlow and is defined in \code{torch.matmul} in Pytorch.
\end{itemize}

Figure~\ref{fig:graphsage} shows a complete example of porting
GraphSAGE from PyTorch to TensorFlow. We place the code side by side
to contrast the key differences. As shown above,
the PyTorch version of \code{SAGEConv} needs to inherit
from \code{torch.nn.Module} while the TensorFlow version inherits from
\code{tensorflow.keras.layers.Layers}. Most of the modifications
are in the \code{\_\_init\_\_} function of the class, where we change the members
defined as \code{nn} modules to TensorFlow's counterparts. In this example, the \code{forward} function (cf. \code{call} function
in TensorFlow) only invokes DGL's message passing computation via \code{update\_all}.
Because no tensor operators are explicitly invoked, there are no modifications for
tensor operators.

\begin{figure}[pt]
	\centering

\begin{subfigure}[t]{0.48\textwidth}
\centering
\begin{lstlisting}
from torch import nn


class SAGEConv(nn.Module):
  def __init__(self, in_feat, out_feat,
               feat_drop=0., activation=None):
    super(SAGEConv, self).__init__()
    src_feat, dst_feat = expand_as_pair(in_feat)
    self.feat_drop = nn.Dropout(feat_drop)
    self.activation = activation
    self.fc_self = nn.Linear(dst_feat, out_feat)
    self.fc_neigh = nn.Linear(src_feat, out_feat)
    gain = nn.init.calculate_gain('relu')
    nn.init.xavier_uniform_(
        self.fc_self.weight, gain=gain)
    nn.init.xavier_uniform_(
        self.fc_neigh.weight, gain=gain)
        
  def forward(self, graph, feat):
    feat_src = feat_dst = self.feat_drop(feat)
    graph.srcdata['h'] = feat_src
    graph.update_all(fn.copy_u('h', 'm'),
                     fn.mean('m', 'neigh'))
    h_neigh = graph.dstdata['neigh']
    rst = self.fc_self(feat_dst)
             + self.fc_neigh(h_neigh)
    return self.activation(rst)
            
\end{lstlisting}
\caption{PyTorch}
\end{subfigure}
\begin{subfigure}[t]{0.48\textwidth}
\centering
\begin{lstlisting}
import tensorflow as tf
from tensorflow.keras import layers

class SAGEConv(layers.Layer):
  def __init__(self, in_feats, out_feats,
               feat_drop=0., activation=None):
    super(SAGEConv, self).__init__()
    
    self.feat_drop = layers.Dropout(feat_drop)
    self.activation = activation
    xinit = tf.keras.initializers.VarianceScaling(
            scale=np.sqrt(2), mode="fan_avg",
            distribution="untruncated_normal")
    self.fc_self = layers.Dense(
            out_feats, kernel_initializer=xinit)
    self.fc_neigh = layers.Dense(
            out_feats, kernel_initializer=xinit)
        
  def call(self, graph, feat):
    feat_src = feat_dst = self.feat_drop(feat)
    graph.srcdata['h'] = feat_src
    graph.update_all(fn.copy_u('h', 'm'),
                     fn.mean('m', 'neigh'))
    h_neigh = graph.dstdata['neigh']
    rst = self.fc_self(feat_dst)
                 + self.fc_neigh(h_neigh)
    return self.activation(rst)
\end{lstlisting}
\caption{TensorFlow}
\end{subfigure}
\caption{The implementation of GraphSAGE in PyTorch and TensorFlow.}
\label{fig:graphsage}
\end{figure}

\end{appendices}

\end{document}